\documentclass[letterpaper, 12pt]{amsart}
\usepackage{graphicx} 
\setlength{\textwidth}{6.5in}
\setlength{\textheight}{9.1in}
\setlength{\topmargin}{-.25in}
\setlength{\oddsidemargin}{0in}
\setlength{\evensidemargin}{0in}
\pdfoutput=1

\usepackage{orcidlink,thumbpdf,lmodern}
\usepackage{amssymb}
\usepackage[utf8]{inputenc}
\usepackage{bbm,tikz}
\RequirePackage{amsthm,amsmath,amsfonts,amssymb}
\RequirePackage{graphicx,url,color}
\usepackage{cleveref}
\usepackage{comment}
\usepackage{soul}
\usepackage{algpseudocode}
\usepackage{algorithm,color,xcolor}
\usepackage{enumitem,amsmath}   
\usepackage{dsfont}
\usepackage{float}

\usepackage[a4paper]{geometry}
\geometry{top=3cm, bottom=3.5cm, left=2.75cm, right=2.75cm, marginparsep=2cm}

\newtheorem{theorem}{Theorem}[section]
\newtheorem{lemma}[theorem]{Lemma}
\newtheorem{definition}[theorem]{Definition}
\newtheorem*{theorem*}{Theorem}
\newtheorem{example}[theorem]{Example}

\newtheorem{corollary}[theorem]{Corollary}

\newcommand{\Trop}{\text{Trop}}

\DeclareMathOperator*{\argmin}{arg\,min}
\DeclareMathOperator*{\argmax}{arg\,max}

\usepackage{framed}
\usepackage{tikz}
\usetikzlibrary{arrows, positioning, automata}
\usepackage{array}
\usepackage{multirow}
\usepackage{caption}
\setlength{\abovecaptionskip}{10pt}

\newcommand{\trt}{{\mathbb{R}^{d+1}\slash \mathbb{R} \mathbf{1}}}
\newcommand{\bis}{{\mathrm{bis}}}

\newcommand{\Sijkl}[0]{S_{i,j,k,\ell}}

\begin{document}

\title{Tropical Bisectors and Carlini-Wagner Attacks}
\author[Grindstaff, Lindberg, Schkoda, Sorea, Yoshida]{Gillian Grindstaff  \and Julia Lindberg \and Daniela Schkoda \and Miruna-Stefana Sorea \and Ruriko Yoshida}
\date{}

\begin{abstract}
    Pasque et al. showed that using a tropical symmetric metric as an activation function in the last layer can improve the robustness of convolutional neural networks (CNNs) against state-of-the-art attacks, including the Carlini-Wagner attack. This improvement occurs when the attacks are not specifically adapted to the non-differentiability of the tropical layer.
  Moreover, they showed that the decision boundary of a tropical CNN is defined by tropical bisectors.  
  In this paper, we explore the combinatorics of tropical bisectors and analyze how the tropical embedding layer enhances robustness against Carlini-Wagner attacks.
  We prove an upper bound on the number of linear segments the decision boundary of a tropical CNN can have. 
  We then propose a refined version of the Carlini-Wagner attack, 
  specifically tailored for the tropical architecture. 
  Computational experiments with MNIST and LeNet5 showcase our attacks improved success rate.
\end{abstract}
\maketitle
\section[Introduction]{Introduction} \label{sec:intro}
Artificial neural networks demonstrate exceptional capability in the fields of natural language processing, computer vision, and genetics. However, they have revealed exceptional vulnerability to adversarial attacks \cite{carlini2017evaluating}.  
As neural networks become prevalent in critical applications such as autonomous driving, healthcare, and cybersecurity, the development of adversarial defense methodologies is central to the reliability and ultimate success of these efforts. In 2024, Pasque et al. introduced a simple and easy to implement convolutional neural network (CNN) robust against state-of-the-art adversarial attacks \cite{Pasque2024}, such as the Carlini-Wagner attack \cite{carlini2017evaluating}, if these attacks are employed with vanilla gradient descent. 
These CNNs feature {\em tropical embedding layers} and their decision boundary is a {\em tropical bisector}, which is the locus of points which are equidistant to a given set in terms of the tropical symmetric distance.

A classifier neural network is a function  
\[
F_\theta: \mathbb{R}^d \to [C] := \{1, \ldots , C\}
\]
that maps an input \( x \) to a predicted class label \( \hat{c} \). In our context of image classification the input domain is $[0,1]^d$ instead of $\mathbb{R}^d$. The model parameters $ \theta$  are found by minimizing the classification error on the training data and remain fixed afterwards. Correspondingly, we simply write $F$ in the following. 

Adversarial machine learning refers to machine learning in adversarial environments, where attacks play a crucial role. There are two main types of attacks:  data poisoning and evasion.  
Data poisoning occurs when attackers manipulate the training set to deceive the model, while an evasion attack involves attackers altering the test set.  
In an evasion attack, one is either using a white box or a black box attack.  
A white-box attack is a type of attack where attackers have complete knowledge of the machine learning model, including access to parameters, hyperparameters, gradients, network architecture, and more. In contrast, a black-box attack occurs when attackers do not have access to the model's parameters, gradients, or architecture.

An \textit{adversarial attack} is a small perturbation \( \delta \) added to an input \( x \), causing the network to misclassify the new data point:  

\[
x' = x + \delta, \quad F(x') \neq c^*,
\]
where $c^*$ is the true label of $x'$.

Carlini and Wagner introduced  adversarial perturbation techniques with Euclidean metrics \cite{carlini2017evaluating} and they showed that these
attacks are able to successfully evade the defensive distillation method \cite{distlation}. The Carlini-Wagner attack is formulated as the solution to an optimization problem which aims to find the smallest perturbation 
with respect to a given $\ell_p$ norm 
so that this perturbation pushes the input $x$ across the decision boundary of the Voronoi region of a target class $t \in [C]$.  The optimization problem can be formulated as the following:  
\begin{align*}
   \text{minimize} &\quad \| \delta \|_p + \lambda \cdot f(x + \delta) \\
   \text{subject to} &\quad x + \delta \in [0, 1]^d,
\end{align*}
where $f: \mathbb{R}^d \rightarrow \mathbb{R}$ such that $f(x + \delta) \leq 0$  
if and only if $x + \delta$ is classified as being from class $\tau \in K$, and a constant $\lambda > 0$.
Gradient descent is used to solve the optimization problem above.
 
The purpose of this paper is to further investigate tropical CNNs and their robustness to adversarial attacks. To better understand how the tropical embedding layer improves robustness against adversarial attacks, we begin by investigating the combinatorics of tropical bisectors of two points. We give an upper bound on the number of linear components such tropical bisectors can have. We then consider how gradients of 
the Carlini-Wagner attack interact with tropical bisectors and we provide evidence as to why tropical CNNs are resilient to 
 Carlini-Wagner attacks.  We conclude by proposing an adversarial attack effective against tropical CNNs and we provide computational experiments with MNIST and LeNet5.  The code for this project is available at \url{https://github.com/DanielaSchkoda/TropicalCNN}.

\section[Background]{Background} \label{sec:back}

\subsection{Convolutional neural networks}
Neural networks are generalizations of linear models which consists of three types of layers: input layers, hidden layers, and output layers.  An input layer is a layer where one provides an observation. Therefore the number of neurons in this layer is the dimension of the input vector.   
An affine linear combination of outputs from the previous layer is then fed into each neuron in a hidden layer. 
Each neuron then outputs the result of an activation function.  There can be many hidden layers and each hidden layer can have a different number of neurons. The weights needed to form each affine linear combination of neurons used as input in a hidden layer are parameters and we estimate them by gradient descents via {\em back propagation}. The output from the final hidden layer is then fed into an activation function at each neuron in the output layer.  An activation function in the output layer depends on the type of supervised learning task one wishes to do.  For binomial classification, one typically uses the sigmoid function. For multinominal classification, the softmax function is a common choice. For a tropical CNN, we use tropical embedding layer as the output layer. 

Convolutional Neural Networks (CNNs) are deep neural networks comprised of an input layer, convolutional layer, activation layer,  pooling layer, flattening process, fully connected layers and an output layer. The goal of a convolutional layer is to extract a feature (feature engineering) from the input 
via a set of learnable filters known as kernels.  
Usually these filters are smaller matrices that slide over the input and take the dot product between filter  parameters and 
subsets of the input image. In the activation layer, a ReLU activating function is applied to the output from the convolutional layer. Next the pooling layer extracts statistics from a subset of outputs from the activation layer. An output from this sequence of layers 
is a tensor. Therefore, in the flattening process, one flattens the output tensor from the pooling layer to a vector. This vector is then fed to a fully connected neural network.  See \cite{CNNBook} for details on CNNs.

\subsection{Tropical CNNs}
For readers unfamiliar with tropical geometry, further background can be found in \cite{MS}.
First, we consider the 
tropical semiring $(\,\mathbb{R} \cup \{-\infty\},\oplus,\odot)\,$ with the max-plus algebra (the tropical arithmetic operations of addition and multiplication) defined as:
$$a \oplus b := \max\{a, b\}, ~~~~ a \odot b := a + b $$
for any $a, b \in \mathbb{R}\cup\{-\infty\}$ where $-\infty$ is the identity element for the tropical addition operation $\oplus$, and $0$ is the identity element for the tropical multiplication operation $\odot$.

The tropical projective torus is defined as
\[
\mathbb{R}^{d+1}/\mathbb{R}{\bf 1}:= \left\{x \in \mathbb{R}^{d+1}\mid x:=(x_1, \ldots , x_{d+1}) = (x_1+c, \ldots , x_d+c), \, \forall c \in \mathbb{R}\right\},
\]
where ${\bf 1} = (1, \ldots , 1) \in \mathbb{R}^{d+1}$. This means that $\mathbb{R}^{d+1}/\mathbb{R}{\bf 1}$ is invariant under the translation by $(c, c, \ldots , c) \in \mathbb{R}^{d+1}$, i.e., for any $x = (x_1, x_2, \ldots , x_{d+1}) \in \mathbb{R}^{d+1}/\mathbb{R}{\bf 1}$,
\[
x = (x_1, x_2, \ldots , x_{d+1}) = (0, x_2 - x_1, \ldots , x_{d+1} - x_1) \in \mathbb{R}^{d+1}/\mathbb{R}{\bf 1}.
\]
This means that the tropical projective torus $\mathbb{R}^{d+1}/\mathbb{R}{\bf 1}$ is isomorphic to $\mathbb{R}^{d}$.

\begin{definition}[Tropical Metric]\label{def:tropicalMetric}
    The {\em tropical metric} is defined for any $x = (x_1, \ldots , x_{d+1}), y = (y_1, \ldots , y_{d+1}) \in \mathbb{R}^{d+1}/\mathbb{R}{\bf 1}$ as
    \[
    d_{\rm tr}(x, y) = \max_{i\in \{1, \ldots , {d+1}\} }\{x_i - y_i\} - \min_{i\in \{1, \ldots , {d+1}\} }\{x_i - y_i\}.
    \]
\end{definition}

\begin{definition}[Tropical Embedding Layer]\label{def:trop_embedding_layer}
A {\em tropical embedding layer} takes a vector $x \in \mathbb{R}^{d+1}/\mathbb{R}\mathbf{1}$ as input, and the activation of the $j$-th neuron in the embedding layer some $l \in \{1, \ldots , L-1\}$ is
\begin{equation} \label{eq:embed}
z_j = \max_i(x_i + w^{(l)}_{ji}) - \min_i(x_i + w^{(l)}_{ji}) = d_{\rm tr}(-{\bf w}^{(l)}_{j}, x).
\end{equation}
where ${\bf w}^{(l)}_{j} = (w^{(l)}_{j1}, \ldots , w^{(l)}_{j{d+1}})$.
\end{definition}

Here we focus on classifications with a finite positive integer $K \geq 2$ classes via a tropical convolutional neural network (CNN) which is naturally robust against black-box and white-box adversarial attacks without adversarial training \cite{Pasque2024} shown in  \Cref{fig:tropicalCNN}.  Let $L$ be the number of layers in a CNN and denote by $f^i, i = 1, \ldots, L-1$  a sequence of functions representing hidden layers in the architecture. 
Suppose $f^{L-1}: \mathbb{R}^{n_1\times n_2\times n_3} \to \mathbb{R}^K/\mathbb{R}^1$ is the map of a convolutional neural network classifier. Then $f^{L-1}(x)$, with input data $x \in \mathbb{R}^{n_1\times n_2\times n_3}$ of $K$ classes, is the output of the network. We then embed the output $f^{L-1}(x)$ into the tropical projective torus using coordinates $z_j$ as in (\ref{eq:embed}):
\[
    f_j^L(x) = \max_i\{f_i^{L-1}(x) + w_{ji}^{(L)}\} - \min_i\{f_{ji}^{L-1}(x)+ w_{ji}^{(L)}\} \\
    = d_{tr}(-{\bf w}_j^{(L)}, x), \  \ j \in {1, \dots, K}.
\]
We show in the following section that the weights, ${\bf w}_j^{(L)}$, train towards {\em Fermat-Weber points} associated with the $k$ classes. See \cite{BSYM}  for details on Fermat-Weber points and using gradient descent to find them.  

Each dimension of the output $f^L(x) \in \mathbb{R}^k$ is the tropical distance between the input and Fermat-Weber points of each class. We then classify the input with a softmin function:
\begin{equation}\label{eq:tropSoftMax}
    \max_{j \in {1, \ldots, K}} \frac{{e^{-d_{tr}(-{\bf w}_j^{(L)},x)}}}{\sum_{i=1}^k e^{-d_{tr}(-{\bf w}_i^{(L)},x)}}.
\end{equation}
Let 
\[
    p_k=  \frac{{e^{-d_{tr}(-{\bf w}_j^{(L)},x)}}}{\sum_{i=1}^k e^{-d_{tr}(-{\bf w}_i^{(L)},x)}}
\]
for an estimated probability for classifying as the class $k \in [K]$. 
\begin{figure}
    \centering
    \includegraphics[width=\textwidth]{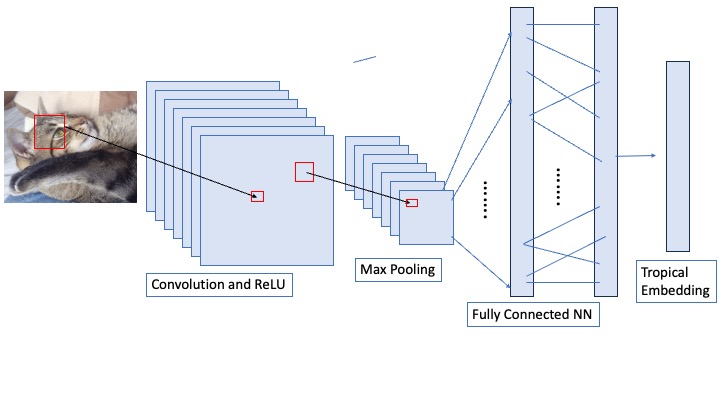}
    
    \vskip -0.5 in
    \caption{Tropical CNN architecture from \cite{Pasque2024}.}\label{fig:tropicalCNN}
\end{figure}
\begin{definition}[Max-tropical Hyperplane~\cite{ETC}]
    \label{def:trop_hyp} 
    A max-tropical hyperplane $H^{\max}_{\alpha}$ is the set of points $u = (u_1, \ldots , u_d)\in \mathbb R^{d+1} \!/\mathbb R {\bf 1}$ satisfying that  
    \begin{equation}\label{eq:trop_hyp}
    \max_{i \in \{1, \ldots , d\}} \{u_i + \alpha_i\}
    \end{equation} 
    is attained at least twice, where $\alpha:=(\alpha_1, \ldots, \alpha_d)\in \mathbb R^{d+1} \!/\mathbb R {\bf 1}$.
\end{definition}

\begin{definition}[Min-tropical Hyperplane~\cite{ETC}]
    \label{def:min_trop_hyp} 
    A min-tropical hyperplane $H^{\min}_{\alpha}$ is the set of points $u = (u_1, \ldots , u_d)\in \mathbb R^{d+1} \!/\mathbb R {\bf 1}$ satisfying that   
    \begin{equation}\label{eq:min_trop_hyp}
    \min_{i \in \{1, \ldots , d\}} \{u_i + \alpha_i\}
    \end{equation} 
    is attained at least twice, where $\alpha:=(\alpha_1, \ldots, \alpha_d)\in \mathbb R^{d+1} \!/\mathbb R {\bf 1}$.
\end{definition}

\begin{definition}[Max-tropical Sectors from Section 5.5 in \cite{ETC}]\label{def:sector}
    For $i\in[d]$, the $i{\rm-th}$ {\em open sector} of $H^{\max}_\alpha$ is defined as
    \begin{equation}\label{eq:max_open_sector}
        S^{\max,i}_\alpha := \{\mathbf{u}\in \mathbb{R}^{d+1}/\mathbb{R}\mathbf{1}\,|\, \alpha_i + u_i > \alpha_j +u_j, \forall j\neq i\},
    \end{equation}
    \noindent and the $i{\rm-th}$ {\em closed sector} of $H^{\max}_\alpha$ is defined as
    \begin{equation}\label{eq:max_closed_sector}
        \overline{S}^{\max,i}_\alpha := \{\mathbf{u}\in \mathbb{R}^{d+1}/\mathbb{R}\mathbf{1}\,|\, \alpha_i + u_i \geq \alpha_j +u_j, \forall j\neq i\}.
    \end{equation}
\end{definition}

\begin{definition}[Min-tropical Sectors]\label{def:min_sector}
    For $i\in[d]$, the $i{\rm-th}$ {\em open sector} of $H^{\min}_\alpha$ is defined as
    \begin{equation}\label{eq:min_open_sector}
        S^{\min,i}_\alpha := \{\mathbf{u}\in \mathbb{R}^{d+1}/\mathbb{R}\mathbf{1}\,|\, \alpha_i + u_i < \alpha_j +u_j, \forall j\neq i\},
    \end{equation}
    \noindent and the $i{\rm-th}$ {\em closed sector} of $H^{\min}_\alpha$ is defined as
    \begin{equation}\label{eq:min_closed_sector}
        \overline{S}^{\min,i}_\alpha := \{\mathbf{u}\in \mathbb{R}^{d+1}/\mathbb{R}\mathbf{1}\,|\, \alpha_i + u_i \leq \alpha_j +u_j, \forall j\neq i\}.
    \end{equation}
\end{definition}

Consider ${S}^{\max, i}_{-x}$ and ${S}^{\min, i}_{-x}$ are the $i$-th open sectors of the max/min-tropical hyperplanes with apex at $x$ as defined in Equations \ref{eq:max_closed_sector} and \ref{eq:min_closed_sector}.
Then for any point $u$ in $\mathbb R^{d+1} \!/\mathbb R {\bf 1}$, we can identify the indices $k$ and $l$ 
where $1\leq k\neq l \leq d$ correspond to the $l$-th open min-sector and the $k$-th open max-sector 
for which $u \in {S}^{\max, k}_{-x} \cap {S}^{\min, l}_{-x}$.
To simplify for our purposes, we use the Kronecker's Delta:
\begin{equation}
\delta_{ij} = \begin{cases}
    1 & \mbox{if } i = j,\\
    0 & \mbox{otherwise}
\end{cases}
\end{equation}
for indicating the indices of a given vector.  

\begin{definition}[Indicators for Open Sectors]
For a given point $x$ in $\mathbb R^{d+1} \!/\mathbb R {\bf 1}$, let $k$ and $l$ be the indices for which $u \in {S}^{\max, k}_{-x} \cap {S}^{\min, l}_{-x}$.
Then, the max/min-tropical indicator vectors for open sectors are defined by
\[
\left({G}^{\max}_{-x}(u)\right)_i = \delta_{ik},\qquad
\left({G}^{\min}_{-x}(u)\right)_i = \delta_{il},\qquad
\textrm{for } i\in [d].
\]
That is, ${G}^{\max}_{-x}(u)$ is the $k$-th unit vector and ${G}^{\min}_{-x}(u)$ is the $l$-th unit vector.
\end{definition}

\begin{lemma}[\cite{BSYM}]
For any two points $x,u\in \mathbb R^{d+1} \!/\mathbb R {\bf 1}$, the gradient at $u$ of the tropical distance between $x$ and $u$ is given by
    \begin{equation}
        \frac{\partial d_{\rm tr}(u,x)}{\partial u} = {G}^{\max}_{-x}(u) - {G}^{\min}_{-x}(u) = \left(\delta_{ik} - \delta_{il}\mid u\in {S}^{\max, k}_{-x} \cap {S}^{\min, l}_{-x}\right)
    \end{equation}
if there are no ties in $(u-x)$, implying that the min- and max-sectors are uniquely identifiable, that is, the point $u$ is inside of open sectors and not on the boundary of $H_{-x}$.
\end{lemma}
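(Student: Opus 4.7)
The plan is to reduce the computation to the elementary fact that the derivative of a pointwise maximum (resp.\ minimum) of finitely many affine functions, evaluated at a point where the extremum is uniquely attained, equals the derivative of the unique extremizer. First I rewrite the tropical distance as a function of the difference $v := u - x$:
\[
d_{\rm tr}(u,x) = \max_i(u_i - x_i) - \min_i(u_i - x_i).
\]
Under the hypothesis that there are no ties in $u-x$, there is a unique index $k$ with $u_k-x_k = \max_i(u_i-x_i)$ and a unique index $l$ with $u_l-x_l = \min_i(u_i-x_i)$.

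Next I translate these argmax/argmin conditions into the sector language of Definitions~\ref{def:sector} and~\ref{def:min_sector}. Specializing $\alpha = -x$, the condition $u \in S^{\max,k}_{-x}$ becomes $u_k - x_k > u_j - x_j$ for all $j\neq k$, which is exactly the statement that $k$ is the unique argmax of $u-x$; the analogous identification holds for $l$ and $S^{\min,l}_{-x}$. Hence the indicator vectors $G^{\max}_{-x}(u)$ and $G^{\min}_{-x}(u)$ are precisely the unit vectors $e_k$ and $e_l$, so that $G^{\max}_{-x}(u) - G^{\min}_{-x}(u) = e_k - e_l$.

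Finally I localize the computation. Because the open sectors are open sets, there is a neighborhood $N$ of $u$ on which the argmax and argmin of $u' - x$ remain equal to $k$ and $l$ for every $u' \in N$. On $N$ the max and min collapse to single coordinate selections, so the tropical distance reduces to the affine function
\[
d_{\rm tr}(u',x) = (u'_k - x_k) - (u'_l - x_l),
\]
whose gradient in $u'$ is manifestly $e_k - e_l$. Combining this with the identification of the previous paragraph yields the claimed formula.

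The proof is essentially bookkeeping rather than requiring a substantive inequality or limiting argument; the only point to be careful about is that the no-ties hypothesis is invoked at exactly the right moment, namely to guarantee both that $k,l$ are unique (so the indicator vectors $G^{\max}_{-x}(u), G^{\min}_{-x}(u)$ are well defined as single unit vectors) and that $u$ lies in the interior of a max-sector and a min-sector (so that the locally affine expression above is valid on a genuine neighborhood, yielding classical differentiability rather than only a subgradient).
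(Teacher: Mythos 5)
Your argument is correct: identifying the unique argmax/argmin indices with the open max-/min-sectors of $H^{\max}_{-x}$ and $H^{\min}_{-x}$, and then using openness of the sectors to see that $d_{\rm tr}(\cdot,x)$ is affine equal to $(u'_k - x_k)-(u'_l - x_l)$ near $u$, gives exactly the stated gradient $e_k - e_l = G^{\max}_{-x}(u) - G^{\min}_{-x}(u)$. The paper itself states this lemma as a citation to \cite{BSYM} without reproducing a proof, and your local-linearization argument is the standard one behind that result (note also that $e_k - e_l$ has coordinate sum zero, so the formula is consistent with working modulo $\mathbb{R}\mathbf{1}$).
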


Now, we have the Jacobian of the softmax as the following.  Let $z_i = d_{\rm tr}(w_i, x)$.
\begin{lemma}[\cite{miranda2017softmax}]\label{tm:grad1}
We have:
    \begin{equation}
        \frac{\partial p_k}{\partial z_j} = p_k \left(\delta_{kj} - p_j\right).
    \end{equation}
\end{lemma}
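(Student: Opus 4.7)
The plan is a direct computation via the quotient rule, following the classical derivation of the softmax Jacobian. First I would fix the notation $S := \sum_{i=1}^{K} e^{-z_i}$ so that $p_k = e^{-z_k}/S$ in the softmin form used in equation~(\ref{eq:tropSoftMax}); equivalently, setting $y_i := -z_i$ reduces the statement to the standard softmax Jacobian in the variables $y_i$, and the chain rule then transfers the identity back to the $z$-variables with the appropriate sign. I will carry out the computation in whichever formulation keeps the final formula aligned with the lemma's statement.

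Next I would split into the two cases $j=k$ and $j\neq k$. Using $\partial S/\partial y_j = e^{y_j}$, the quotient rule for $j=k$ gives
\[
\frac{\partial p_k}{\partial y_k} \;=\; \frac{e^{y_k} S - e^{y_k}\cdot e^{y_k}}{S^2} \;=\; p_k - p_k^2 \;=\; p_k(1-p_k),
\]
while for $j\neq k$ the numerator's derivative vanishes and one gets
\[
\frac{\partial p_k}{\partial y_j} \;=\; \frac{-e^{y_k}\cdot e^{y_j}}{S^2} \;=\; -p_k p_j.
\]
Folding the two cases into a single expression via the Kronecker delta yields $\partial p_k/\partial y_j = p_k(\delta_{kj} - p_j)$, which is the claimed identity.

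Because the computation is elementary, the only real bookkeeping issue is tracking the sign convention between softmax and softmin; I would state the lemma with $z_i = d_{\rm tr}(w_i,x)$ and either absorb the sign by differentiating with respect to $-z_j$ or adjust the final formula, so that the identity composes cleanly with the gradient $\partial d_{\rm tr}(u,x)/\partial u = G^{\max}_{-x}(u)-G^{\min}_{-x}(u)$ from the preceding lemma. This sign-consistent form is what is later needed when the chain rule is applied to differentiate the classifier through the tropical embedding layer, so I would make the convention explicit before moving on.
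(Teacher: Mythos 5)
Your quotient-rule computation is the standard derivation of the softmax Jacobian and is correct; the paper itself gives no proof of this lemma, citing \cite{miranda2017softmax}, whose argument is exactly this elementary case split on $j=k$ versus $j\neq k$. Your attention to the sign convention is also well placed, since with the paper's softmin form $p_k = e^{-z_k}/\sum_i e^{-z_i}$ one literally gets $\partial p_k/\partial z_j = -p_k(\delta_{kj}-p_j)$, so the identity as printed holds in the variables $y_j=-z_j$, exactly as you note.
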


Also note that we have
\[
\frac{\partial (-\log p_k)}{\partial p_k} = -\frac{1}{p_k}\frac{\partial p_k}{\partial z_j}.
\]

With \Cref{tm:grad1}, we have an explicit formula for the gradient for the tropical embedding layer located at the output layer of the tropical CNN as follows: 
\begin{theorem}[\cite{BSYM}] Suppose we have $L \geq 2$ hidden layers between the input layer and the output layer in a tropical CNN, i.e., the $L$th hidden layer is the tropical embedding layer. Let $K$ be the number of classes in the response variable, so that the tropical embedding layer has incoming edge weights $w_j \in \mathbb{R}^K/\mathbb{R}{\bf 1}$. Then for the $j$th neuron in the tropical embedding layer, we have a gradient: 
\begin{equation}
    \frac{\partial p_k}{\partial w_j} = -\left(\delta_{kj} - p_j\right)\left(\delta_{ik} - \delta_{il}\mid w_j\in {S}^{\max, k}_{-x} \cap {S}^{\min, l}_{-x}\right).
\end{equation}
\end{theorem}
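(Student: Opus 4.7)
The plan is to apply the chain rule to decompose
\[
\frac{\partial p_k}{\partial w_j} \;=\; \frac{\partial p_k}{\partial z_j}\,\cdot\,\frac{\partial z_j}{\partial w_j},
\]
where $z_j = d_{\rm tr}(-w_j, x)$ is the activation of the $j$th neuron in the tropical embedding layer. The key observation that makes this chain rule collapse to a single term is that $z_i$ depends on $w_i$ alone, so $\partial z_i/\partial w_j = 0$ for $i \neq j$ and the sum over hidden neurons reduces to just the $j$th summand.

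For the scalar factor $\partial p_k/\partial z_j$, I would invoke the softmax Jacobian of \Cref{tm:grad1}, adapted to the softmin formulation in \eqref{eq:tropSoftMax} — the exponent carries a minus sign, so letting $y_i = -z_i$ the identity $p_k = e^{y_k}/\sum_i e^{y_i}$ reduces the computation to the standard softmax derivative and, after the outer chain rule in $y$, yields the prefactor $-(\delta_{kj} - p_j)$ announced in the theorem (with the extra $p_k$ absorbed, as in the parenthetical remark immediately preceding the theorem that relates the derivative of $-\log p_k$ to this quantity).

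For the vector factor $\partial z_j/\partial w_j$, I would apply the preceding lemma of \cite{BSYM} that gives $\partial d_{\rm tr}(u,x)/\partial u = G^{\max}_{-x}(u) - G^{\min}_{-x}(u)$, evaluated at $u = -w_j$, and combine it with the minus sign coming from differentiating $-w_j$ with respect to $w_j$. Unpacking the definition of the indicator vectors, when $w_j \in S^{\max,k}_{-x} \cap S^{\min,l}_{-x}$ the $i$th component of the resulting vector is exactly $\delta_{ik} - \delta_{il}$, matching the second factor in the claim. The product of the scalar and vector factors then produces the stated gradient.

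The main obstacle, as I see it, is the bookkeeping on signs and on the degeneracy hypothesis rather than anything combinatorial. The embedding layer is written in terms of $-w_j$ inside the tropical distance, so one minus sign cancels against the softmin-versus-softmax flip in the Jacobian step; keeping track of that double negation together with the implicit convention identifying $w_j$ with its image under the sector map is the most error-prone step. In addition, the identity only holds away from the boundary of the tropical hyperplane $H_{-x}$, i.e., where $(-w_j - x)$ has no repeated max or min, so I would state this non-degeneracy assumption explicitly (as in the lemma above) and note that on the measure-zero tie-set one must pass to a subgradient.
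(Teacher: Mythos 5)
Your chain-rule decomposition through $z_j$ --- the softmax/softmin Jacobian of \Cref{tm:grad1} (with the extra $p_k$ handled via the $-\log p_k$ remark) multiplied by the tropical-distance gradient lemma of \cite{BSYM} evaluated at $-w_j$ --- is exactly the derivation the paper intends; the theorem is quoted from \cite{BSYM} with no written proof, and the surrounding lemmas are assembled precisely as you assemble them. Your caveats about the sign bookkeeping, the paper's loose identification of $w_j$ with $-w_j$ in the sector condition, and the need for the no-ties (off-$H_{-x}$) hypothesis are all consistent with the paper's setup, so the proposal is correct and takes essentially the same approach.
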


\section{Combinatorics of tropical bisectors}
This section explores the geometry and combinatorics of the decision boundary of a tropical CNN. For any network with a final tropical embedding layer as described in \Cref{def:trop_embedding_layer}, the decision boundary consists locally of sets of tropically equidistant points, called tropical bisectors.
Given two points $a,b \in \trt$, their \emph{tropical bisector} $\bis(a,b)$ is defined:
\[ \bis(a,b) = \{x \in \trt \ : \ \text{dist}_{\mathrm{tr}}(x,a) = \text{dist}_{\mathrm{tr}}(x,b) \}. \]

We will see in this section that $\bis(a,b)$ consists of a finite number of connected linear segments. Let $\mathcal{C}(a,b) $ be the number of such linear pieces of $\bis(a,b)$, a marker of the complexity of the decision boundary.

As outlined in \cite[Section 4]{Criado_2021}, the distance condition $\text{dist}_{\mathrm{tr}}(x,a) = \text{dist}_{\mathrm{tr}}(x,b)$ is equivalent to finding $x \in \trt$ such that 
\begin{align} \max_{i,j \in [d+1]} \{ x_i - a_i -x_j + a_j \} = \max_{k,\ell \in [d+1]} \{ x_k - b_k - x_\ell + b_\ell \}. \label{eq: trop bis}
\end{align}

For fixed $a,b \in \trt$, $x$ satisfies the above equation if and only if there are $i,j,k,\ell$ (i.e. indices achieving the maxima) such that:
\begin{align}\label{eq: bis sys gen} 
    x_i - a_i - x_j + a_j &= x_k - b_k -x_\ell + b_\ell \notag\\
    x_i - a_i - x_j + a_j &\geq x_m - a_m - x_n + a_n  &&\forall m,n \in [d+1] \tag{$A_{ijk\ell}$} \\
    x_k - b_k - x_\ell + b_\ell &\geq x_m - b_m - x_n + b_n &&\forall m,n \in [d+1]. \notag 
\end{align}
To study $\bis(a,b),$ we decompose it according to these critical indices:
\begin{definition}
  For each $i,j,k,\ell\in [d+1]$, $i \neq j$ and $k \neq \ell$, we define $\Sijkl(a,b)$ to be the set of solutions to the linear system \eqref{eq: bis sys gen}. 
\end{definition}

 We note as a consequence that \[\bis(a,b) = \bigcup_{i,j,k,\ell}\Sijkl(a,b)\]
and that $\Sijkl (a,b)$ is either empty (if the system is infeasible) or contains an equidistant segment satisfying \eqref{eq: trop bis} with indices $i,j, k,\ell$. For generic $a,b$, then, $\mathcal{C}(a,b)$ also counts the number of non-empty $\Sijkl(a,b)$. Furthermore, in \cite{icking1995convex} and \cite{Criado_2021} it was shown that if $a$ and $b$ are sufficiently general with respect to the facets defining the tropical unit ball, then $\bis(a,b)$ is homeomorphic to $\mathbb{R}^{d-1}$. A corollary of this result is that $\bis(a,b)$ is connected.

Since $\bis(a,b)$ is invariant under translation, we assume without loss of generality that $a = 0 \in \trt$ and denote $\bis(b) := \bis(0,b)$, $\Sijkl(b) := \Sijkl(0,b)$, and $\mathcal{C}(b):= \mathcal{C}(0,b)$ for simplicity. 

Therefore, our question now reduces to studying the feasibility of $\Sijkl(b)$, defined by the slightly simpler equations
\begin{align}\label{eq: bis sys} 
    x_i  - x_j  &= x_k - b_k -x_\ell + b_\ell \notag\\
    x_i  - x_j  &\geq x_m  - x_n   &&\forall m,n \in [d+1] \tag{$A^*_{ijkl}$} \\
    x_k - b_k - x_\ell + b_\ell &\geq x_m - b_m - x_n + b_n &&\forall m,n \in [d+1] \notag
\end{align}
where $i \neq j$ and $k \neq \ell$.

\begin{example}\label{eg:secbiseceg}
    Consider $a = (0, 0, 0), \, b = (1, 2, 0) \in \mathbb{R}^{3}/\mathbb{R}{\bf 1}$.
     For indices $i = 1, \, j = 3, \, k = 1, \, \ell = 2$ \eqref{eq: bis sys gen} is feasible. This corresponds to the maximum in \eqref{eq: trop bis} being achieved at these same indices.  Therefore we have 
    \[
    \bis(a,b) = \left\{(x_1, 1, 0): x_1 \in \mathbb{R}\right\}.
    \]
    
    \begin{figure}[h!]
        \centering
        \includegraphics[width=0.7\textwidth]{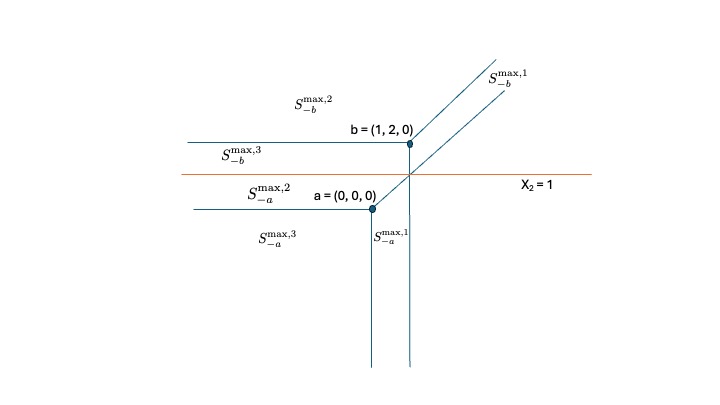}
        \caption{Blue lines represent tropical hyperplanes $H_{-a}^{\max}$ and $H_{-b}^{\max}$ where $a = (0, 0, 0)$ and $b=(1, 2, 0)$. The red line represents the bisector $\bis(a, b)$. }
        \label{fig:secbiseceg}
    \end{figure}

    \Cref{fig:secbiseceg} shows the relation between sectors $S_{-a}^{\max, 1}$, $S_{-a}^{\max, 2}$, $ S_{-a}^{\max, 3}$, $S_{-b}^{\max, 1}$, $ S_{-b}^{\max, 2}$, $ S_{-b}^{\max, 3}$, and the bisector $\bis(a, b)$.
\end{example}

Throughout this section, we assume that $b$ is in general position and all coordinates of $b$ are distinct (i.e. $b_i \neq b_j$ for any $i,j \in [d+1]$). Observe that the set of $b \in \trt$ satisfying these assumptions lie on the complement of a set with Lebesgue measure zero.

The following lemmas show that the existence of points in $\Sijkl (b)$ depends on the components of $b$ in restricted and sometimes mutually exclusive ways. We use these facts to bound $\mathcal{C}(b)$ in \Cref{lem:last-bound}.

\begin{lemma}\label{lem: bis non empty bijkl order}
    Suppose $S_{i,j,k,\ell}(b) \neq \emptyset$, then $b_i \geq b_j$, $b_\ell \geq b_k$, and $b_k \leq b_i$.
\end{lemma}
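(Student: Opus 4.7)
The plan is to extract the three inequalities directly from the defining system \eqref{eq: bis sys} by choosing convenient values of $m,n$ and combining the resulting inequalities with the equality constraint. This is purely an exercise in manipulating the system, so no geometric picture is strictly needed, although it is worth keeping in mind that the inequalities for $m,n$ express the fact that $i,j$ index coordinates achieving the max/min of $x$, while $k,\ell$ index the max/min of $x-b$.

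First I would obtain $b_\ell\geq b_k$. Specializing the second block of inequalities in \eqref{eq: bis sys} to $m=k$, $n=\ell$ yields
\[
x_i - x_j \;\geq\; x_k - x_\ell.
\]
The defining equality rewrites as $x_i-x_j = (x_k-x_\ell)+(b_\ell-b_k)$, so subtracting gives $b_\ell-b_k\geq 0$ immediately.

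Next I would obtain $b_i\geq b_j$ in a symmetric way, this time using the third block with $m=i$, $n=j$:
\[
x_k-b_k-x_\ell+b_\ell \;\geq\; x_i-b_i-x_j+b_j.
\]
Substituting the equality $x_k-b_k-x_\ell+b_\ell = x_i-x_j$ into the left-hand side gives $x_i-x_j\geq x_i-x_j-b_i+b_j$, i.e.\ $b_i\geq b_j$.

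Finally I would derive $b_k\leq b_i$ by combining two of the inequality families. Taking the third block with $n=\ell$ gives $x_k-b_k\geq x_m-b_m$ for every $m$; with $m=i$ this says $b_i-b_k\geq x_i-x_k$. On the other hand, the second block with $n=j$ gives $x_i\geq x_m$ for every $m$; with $m=k$ this gives $x_i-x_k\geq 0$. Chaining the two inequalities yields $b_i-b_k\geq 0$, completing the proof. The main (and only) obstacle is the purely bookkeeping one of choosing the right indices $m,n$ for each inequality; no extra idea is required.
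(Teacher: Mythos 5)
Your proposal is correct and follows essentially the same route as the paper: the same specializations of $(m,n)$ in the two inequality blocks of \eqref{eq: bis sys} (namely $(k,\ell)$ and $(i,j)$ for the first two inequalities, and $(i,\ell)$ together with $(k,j)$ for the third) combined with the equality constraint in exactly the same way. No gaps.
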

\begin{proof}
    $S_{i,j,k,\ell}(b) \neq \emptyset $ implies that there exists $x \in \trt$ such that \eqref{eq: bis sys} holds for all $m,n\in [d+1].$ Taking various $m,n$ values for ${i,j,k,\ell}$ we see that
    \begin{align}
        x_i - x_j &= x_k - b_k - x_\ell + b_\ell \label{lem1:eq1} \\
        x_i - x_j &\geq x_k - x_\ell \label{lem1:eq2}  \\
        x_k - b_k - x_\ell + b_\ell &\geq x_i -b_i - x_j + b_j. \label{lem1:eq3}  \\
        x_i - x_j &\geq x_k - x_j \label{lem1:eq4}\\
        x_k - x_\ell - (b_k - b_{\ell}) &\geq x_i - x_\ell - (b_i - b_\ell). \label{lem1:eq5}
    \end{align}
    
    Equations \eqref{lem1:eq1} and \eqref{lem1:eq3} imply that $x_i - x_j \geq x_i -b_i - x_j + b_j$ or that $0 \geq -b_i + b_j$, i.e. $b_i \geq b_j$. Similarly, \eqref{lem1:eq1} and \eqref{lem1:eq2} imply that $x_k - b_k - x_\ell + b_\ell \geq x_k - x_\ell$ which gives that $-b_k + b_\ell \geq 0$ or $b_\ell \geq b_k$. Equation \eqref{lem1:eq5} gives $b_k \leq b_i + x_k - x_i$ and \eqref{lem1:eq4} gives that $x_i \geq x_k$, implying $b_i \geq b_k.$
\end{proof}

In particular, the contrapositive of \Cref{lem: bis non empty bijkl order} gives that if $b_i < b_j$, $b_\ell < b_k$, or $b_i < b_k$ then $S_{i,j,k,l}(b) = \emptyset$. As an immediate corollary, we have that if $S_{i,j,k,\ell}(b) \neq \emptyset$ then $S_{j,i,m,n} = \emptyset$ and $S_{m,n,\ell,k} = \emptyset$ for all $m,n \in [d+1]$. If in addition, $i\neq k,$ then $S_{k,m,i,n}= \emptyset$ as well. 

We now refine \Cref{lem: bis non empty bijkl order} based on the maximum and minimum values of $b \in \trt $.

\begin{lemma}\label{lem:max and min} 
Let $i,j,k,\ell \in [d+1]$ with $i \neq j$ and $k \neq \ell$.
\begin{enumerate} 
    \item[(a)] If $\Sijkl(b)\neq \emptyset$ with $i=\ell,$ then 
    $b_i = \max\{b\}$ and $b_j \geq 2b_k - b_i$. 
    \item[(b)]  If $\Sijkl(b)\neq \emptyset$ with $j=k$, then  $b_j = \min \{b\}$ and $b_j \leq 2 b_\ell - b_i$.
\end{enumerate}
In particular, $S_{i,j,j,i} \neq \emptyset$ if and only if $b_i = \max\{b\}$ and $b_j = \min \{b\}$.
\end{lemma}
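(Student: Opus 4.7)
The plan is to treat both parts by the same mechanism: parametrize any $x\in \Sijkl(b)$ by two ``gap'' coordinates, use the equality in \eqref{eq: bis sys} to fix their sum in terms of $b$, and then extract the two missing pieces of information---extremality of a coordinate of $b$ and the two-coordinate bound---from carefully chosen inequalities in the two families of $(A^*_{ijkl})$.

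For part (a), fix $x\in \Sijkl(b)$ with $\ell = i$ and set $y := x_i - x_j$ and $z := x_i - x_k$. The equality rearranges to $y + z = b_i - b_k$. Two inequalities in the first family, with $(m,n)=(k,j)$ and $(m,n)=(i,k)$, will yield $y \geq z \geq 0$; letting $n = j$ and $m$ vary in the same family further gives $x_i = \max\{x\}$. To upgrade this to $b_i = \max\{b\}$, I take the second-family inequality with $m = k$ and $n$ arbitrary, which cancels the $x_k - b_k$ terms and leaves $b_i - b_n \geq x_i - x_n \geq 0$. For the bound $b_j \geq 2 b_k - b_i$, the key observation is that the second-family inequality with $(m,n) = (j,i)$ cancels the $-x_i + b_i$ terms and rearranges to $y - z \geq b_k - b_j$; subtracting this from $y + z = b_i - b_k$ and using $z \geq 0$ delivers the claim.

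Part (b) is entirely parallel after swapping the roles of max and min. With $j = k$, set $y := x_i - x_j$ and $z := x_\ell - x_j$; the equality becomes $y + z = b_\ell - b_j$. First-family inequalities give $y \geq z \geq 0$ and $x_j = \min\{x\}$; the second-family inequality with $n = \ell$ and $m$ arbitrary then cancels $-x_\ell + b_\ell$ and yields $b_j = \min\{b\}$. The mirror of the crucial step above---the second-family inequality with $(m,n) = (j,i)$---cancels $x_j - b_j$, reduces to $x_i - x_\ell \geq b_i - b_\ell$, and combined with $y + z = b_\ell - b_j$ and $z \geq 0$ gives $b_j \leq 2 b_\ell - b_i$.

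For the final equivalence (the case $k = j$ and $\ell = i$) the ``only if'' direction is immediate: (a) forces $b_i = \max\{b\}$ and (b) forces $b_j = \min\{b\}$. For the converse, I would exhibit the explicit witness $x = \tfrac12 b$: direct substitution shows both sides of the equality evaluate to $\tfrac12(b_i - b_j)$, and every inequality in $(A^*_{ijkl})$ reduces to $b_i - b_j \geq \pm(b_m - b_n)$, which holds because $b_i$ and $b_j$ are the extremes of $b$. The main obstacle in both parts is isolating the specific ``antisymmetric'' choice $(m,n) = (j,i)$ in the second family: several more obvious alternatives (for instance $(m,n) = (\ell,i)$ in part (a)) only deliver strictly weaker constants, so the sharp factor $2$ really does require that particular cancellation.
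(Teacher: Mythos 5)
Your proposal is correct and follows essentially the same route as the paper's proof: the inequalities you isolate (in particular the second-family choice $(m,n)=(j,i)$, which is exactly \eqref{eq: sijki 3} and \eqref{eq:sijjk 3} in the paper, together with $x_i\ge x_k$, resp.\ $x_\ell \ge x_j$, and the equality) and the witness $x=\tfrac12 b$ for the converse are the same; your gap variables $y,z$ are only a bookkeeping repackaging of the paper's chained estimates.
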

\begin{proof}
    First suppose $i=\ell$ and $S_{i,j,k,\ell}(b)= S_{i,j,k,i}(b) \neq \emptyset$. Then for any $m \in[d+1]$, there exists some $x \in \trt$ such that 
    \begin{align}
        x_i - x_j &\geq x_m - x_j \label{eq1} \\
        x_k - x_i - (b_k - b_i) &\geq x_k - x_m - (b_k - b_m) \label{eq2}
    \end{align}
    This gives
    \begin{align*}
        b_m &\leq (x_k - x_i) - (b_k - b_i) + (x_m - x_k) + b_k = b_i + x_m - x_i \leq b_i
    \end{align*}
    Equation \eqref{eq2} gives the first inequality and \eqref{eq1} gives the last. In particular, this shows that $b_i = \max \{b\}$. 

    To show that $b_j \geq 2b_k - b_i$, observe that $S_{i,j,k,i}(b) \neq \emptyset$ implies that there exists $x \in \trt$ satisfying
    \begin{align}
        x_i - x_j &= x_k - x_i - (b_k - b_i) \label{eq: sijki 1}\\
        x_i - x_j &\geq x_k - x_j \label{eq: sijki 2} \\
        x_k - x_i - (b_k - b_i) &\geq x_j - x_i - (b_j - b_i) \label{eq: sijki 3}.
    \end{align}
    These inequalities then give:
    \begin{align*}
        b_j &\geq b_i + (x_j - x_i) + (x_i - x_k) + (b_k - b_i) = 2b_k - b_i + 2(x_i - x_k) \geq 2b_k - b_i
    \end{align*}
    where the first inequality is from \eqref{eq: sijki 3}, the second is from \eqref{eq: sijki 1} and the third is from \eqref{eq: sijki 2}.

    To show part (b), suppose $j=k$ and $\Sijkl(b) = S_{i,j,j,\ell}(b) \neq \emptyset$ and consider $x_m$ for any $m \in [d+1]$. Then 
    \begin{align}
        x_i - x_j &\geq x_i - x_m \label{eq3} \\
        x_j - x_\ell - (b_j - b_\ell) &\geq x_m - x_\ell - (b_m - b_\ell) \label{eq4}
    \end{align}
    This gives
    \begin{align*}
        b_j &\leq b_m + x_j - x_m \leq b_m
    \end{align*}
    where the first inequality is by \eqref{eq3} and the second is by \eqref{eq4}. In particular, this shows that $b_j = \min\{b\}$. 

    To show that $b_j \leq 2 b_\ell - b_i$, observe that there is some $x \in S_{i,j,j,\ell}(b)$ that satisfies
    \begin{align}
        x_i - x_j &= x_j - x_\ell - (b_j - b_\ell) \label{eq:sijjk 1}\\
        x_i - x_j &\geq x_i - x_\ell \label{eq:sijjk 2}\\
        x_j - x_\ell - (b_j - b_\ell ) &\geq x_j - x_i - (b_j - b_i)\label{eq:sijjk 3}.
    \end{align}
    These inequalities then give
    \begin{align*}
        b_j &= b_\ell + (x_j - x_\ell) + (x_j - x_i) \\
        &\leq b_\ell + 2(x_j - x_\ell) - (b_j - b_\ell) + (b_j - b_i) \\
        &= 2 b_\ell - b_i + 2(x_j - x_\ell) \\
        &\leq 2 b_\ell - b_i,
    \end{align*}
    where the first line is from \eqref{eq:sijjk 1}, the second line is from \eqref{eq:sijjk 3} and the final line is by \eqref{eq:sijjk 2}.

    Finally, it remains to show that if $b_i = \max \{b\}$ and $b_j = \min \{b\}$ then $S_{i,j,j,i} \neq \emptyset$. Suppose that $b_i = \argmax\{b\}$ and $b_j = \argmin \{b\}$. We claim that the point $x = \frac{1}{2} b$ is in $S_{i,j,j,i}(b)$. One can directly verify that the equality constraint $x_i - x_j = x_j - b_j - x_i + b_i$ is met. Next, observe that $b_i - b_j \geq b_m - b_n$ for all $m,n \in [d+1]$. With this we see both inequality constraints $x_i - x_j \geq x_m - x_n$ and $x_j - b_j - x_i + b_i \geq x_m - b_m - x_n + b_n$ is met for all $m,n \in [d+1]$. Therefore, $x = \frac{1}{2} b \in S_{i,j,j,i}(b)$.
\end{proof}

To this point, we have shown how the feasibility of $\Sijkl (b)$ implies certain relationships among the entries of $b \in \trt$. We now show that there is symmetry among the various $\Sijkl (b)$ components.

\begin{lemma}\label{lem: pairs of nonempty}
   $S_{\ell,k,j,i}$ is an invertible affine transformation of $\Sijkl$. In particular,  $S_{i,j,k,\ell}(b) \neq \emptyset$ if and only if $S_{\ell,k,j,i}(b) \neq \emptyset$. Further, if $S_{i,j,k,\ell} (b) \neq \emptyset$ then $b_j \leq b_\ell$. 
\end{lemma}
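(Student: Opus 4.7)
The plan is to exhibit an explicit invertible affine map sending $\Sijkl(b)$ onto $S_{\ell,k,j,i}(b)$, and then reduce the inequality $b_j \leq b_\ell$ to an application of \Cref{lem: bis non empty bijkl order} on the transformed system.

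First, I would consider the involution $T:\trt \to \trt$ defined by $T(x) = b - x$. This is clearly an affine bijection with $T^{-1}=T$. I would then check directly that $x \in \Sijkl(b)$ if and only if $T(x) \in S_{\ell,k,j,i}(b)$. Letting $y = b - x$, a short computation gives $y_\ell - y_k = (b_\ell - b_k) - (x_\ell - x_k)$ and $y_j - b_j - y_i + b_i = x_i - x_j$. Rearranging the defining equality of $\Sijkl(b)$, namely $x_i - x_j = x_k - x_\ell + b_\ell - b_k$, shows $y_\ell - y_k = y_j - b_j - y_i + b_i$, which is the equality constraint of $S_{\ell,k,j,i}(b)$.

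Next I would verify the two systems of inequalities. The constraint $y_\ell - y_k \geq y_m - y_n$ translates, after substituting $y = b - x$, to $x_k - b_k - x_\ell + b_\ell \geq x_n - b_n - x_m + b_m$, which is precisely the $(n,m)$ instance of the second system of inequalities in \eqref{eq: bis sys} for $\Sijkl(b)$. Similarly, $y_j - b_j - y_i + b_i \geq y_m - b_m - y_n + b_n$ reduces to $x_i - x_j \geq x_n - x_m$, which is the $(n,m)$ instance of the first system of inequalities in \eqref{eq: bis sys}. Since swapping the dummy indices ranges over the same family of constraints, $T$ maps $\Sijkl(b)$ bijectively onto $S_{\ell,k,j,i}(b)$, proving the first assertion and the ``if and only if.''

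For the final claim, I would apply \Cref{lem: bis non empty bijkl order} to $S_{\ell,k,j,i}(b)$ rather than to $\Sijkl(b)$: the lemma with indices $(i',j',k',\ell')=(\ell,k,j,i)$ yields $b_{k'} \leq b_{i'}$, i.e.\ $b_j \leq b_\ell$. The only step requiring care is the index bookkeeping in the symbolic verification that $T$ respects both inequality families; once that is set up carefully, everything else is a direct substitution and an invocation of the earlier lemma.
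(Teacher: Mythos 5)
Your proof is correct, and it takes a genuinely different (and cleaner) route than the paper. The paper constructs a bespoke affine map that permutes and shifts only the four distinguished coordinates ($y_\ell = x_i$, $y_k = x_j$, $y_j = x_k - b_k + b_j$, $y_i = x_\ell - b_\ell + b_i$, identity elsewhere); because that map treats the indices $i,j,k,\ell$ asymmetrically, verifying the inequality families requires separate handling of the pairs $(m,n)$ drawn from $\{i,j,k,\ell\}$ and, notably, an appeal to \Cref{lem: bis non empty bijkl order} (the fact $b_i \geq b_j$) to close one of the cases. Your map $T(x) = b - x$ is a single global involution on $\trt$ (well defined modulo $\mathbb{R}\mathbf{1}$), and under the substitution $y = b - x$ the equality constraint and the two inequality families of $S_{\ell,k,j,i}(b)$ transform verbatim into those of $\Sijkl(b)$ with the dummy pair $(m,n)$ reversed, so no case analysis and no earlier lemma are needed for the equivalence; invertibility is immediate since $T^{-1} = T$. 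Your derivation of $b_j \leq b_\ell$ by applying \Cref{lem: bis non empty bijkl order} to $S_{\ell,k,j,i}(b)$ with indices $(i',j',k',\ell') = (\ell,k,j,i)$, reading off $b_{k'} \leq b_{i'}$, is exactly the right bookkeeping and is in fact more explicit than the paper's closing sentence, which cites the lemma without spelling out that it must be applied to the transformed system. In short, the paper's map makes the pairing of components look like a tailored coordinate exchange, while your reflection through $b/2$ exposes it as a built-in symmetry of the bisector ($x$ is equidistant from $0$ and $b$ if and only if $b - x$ is), which is both shorter and more conceptual.
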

\begin{proof}
    Suppose $S_{i,j,k,\ell}(b) \neq \emptyset$. Then there exists an $x \in \trt$ such that 
    \begin{align}
    x_i  - x_j  &= x_k - b_k -x_\ell + b_\ell \label{eq11}\\
    x_i  - x_j  &\geq x_m  - x_n   &&\forall m,n \in [d+1] \label{eq21} \\
    x_k - b_k - x_\ell + b_\ell &\geq x_m - b_m - x_n + b_n &&\forall m,n \in [d+1] \label{eq31}
\end{align}
Consider the affine transformation 
\[ y_\ell = x_i, \  y_k = x_j, \  y_j = x_k - b_k + b_j, \  y_i = x_\ell - b_\ell + b_i, \   y_n = x_n,   \ \forall n \in [d+1] \backslash \{i,j,k,\ell\}. \]
We want to show that  $y \in \trt$ satisfies
\begin{align}
    y_\ell  - y_k  &= y_j - b_j - y_i + b_i \label{eq41}\\
    y_\ell - y_k &\geq y_m - y_n   &&\forall m,n \in [d+1]\label{eq51} \\
    y_j - b_j - y_i + b_i &\geq y_m - b_m - y_n + b_n &&\forall m,n \in [d+1]   \label{eq61}
\end{align}

To see that $y$ satisfies \eqref{eq41}, observe that $y_\ell - y_k = x_i - x_j$ and $y_j - b_j - y_i + b_i = x_k - b_k - x_\ell + b_\ell$. These two expressions are then equal by \eqref{eq1}.

To see that it satisfies \eqref{eq51}, observe that $y_\ell - y_k = x_i - x_j \geq x_m - x_n = y_m - y_n$ for all $m,n \in [d+1] \backslash \{i,j\}$ so it suffices to show that 
\begin{align*} 
&y_\ell - y_k = x_i - x_j \geq y_i - y_j = x_\ell - b_\ell - x_k + b_k + b_i - b_j \\
&y_\ell - y_k = x_i - x_j \geq y_j - y_i = x_k - b_k - x_\ell + b_\ell + b_j - b_i.
\end{align*}
To see the first inequality, observe that from \eqref{eq31} $x_k - b_k - x_\ell + b_\ell \geq x_j - b_j - x_i + b_i$. Multiplying by $-1$ and rearranging the terms then gives the first inequality. To see the second inequality, observe that from \eqref{eq11}, $x_i - x_j = x_k - b_k - x_\ell + b_\ell$. In addition, by \Cref{lem: bis non empty bijkl order} we have that $b_i \geq b_j$ so $b_j - b_i \leq 0$. Combining these two statements then gives $x_i - x_j \geq x_k - b_k - x_\ell + b_\ell + b_j - b_i$.

Finally, to see that $y$ satisfies \eqref{eq61}, first observe that
\begin{align*} 
y_j - b_j - y_i + b_i &= x_k - b_k - x_\ell + b_\ell \\
&\geq x_m - b_m - x_n + b_n \ \forall m,n \in [d+1] \\
&= y_m - b_m - y_n + b_n \ \forall m,n \in [d+1] \backslash \{m,n\}.
\end{align*}
Therefore, it suffices to show that 
\begin{align*}
&y_j - b_j - y_i + b_i = x_k - b_k - x_\ell + b_\ell \geq y_k - y_\ell = x_j - x_i, \\
&y_j - b_j - y_i + b_i = x_k - b_k - x_\ell + b_\ell \geq y_\ell - y_k = x_i - x_j.
\end{align*}
To see why the first inequality holds, observe that from \eqref{eq31} $x_k - b_k - x_\ell + b_\ell \geq 0$ and from \eqref{eq21}, $x_j - x_i \leq 0$. Therefore, $x_k - b_k - x_\ell + b_\ell \geq 0 \geq x_j - x_i$. The second inequality holds from \eqref{eq11}.

Observe that this affine transformation is invertible, giving that $\Sijkl (b) \neq \emptyset$ if and only if $S_{\ell,k,j,i} (b) \neq \emptyset$. 
Finally, from \Cref{lem: bis non empty bijkl order} we have that if $\Sijkl (b) \neq \emptyset$, then $b_j \leq b_\ell$.
\end{proof}

\Cref{lem: pairs of nonempty} shows that certain components of $\bis (b)$ come in pairs. We now show that certain combinations of the linear systems \eqref{eq: bis sys} cannot be simultaneously feasible.

\begin{lemma}\label{lem: sijjk or slikl}
    Let $b \in \trt$ be generic and let $i,j,k,\ell \in [d+1]$ where $b_j = \min \{b\}$, $b_{\ell} = \max \{b\}$, $i \neq j$ and $k \neq \ell$. Then at most one of $S_{i,j,j,k}(b)$ or $S_{\ell, i,k,\ell}(b)$ can be non-empty.
\end{lemma}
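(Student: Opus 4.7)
The plan is to extract from each of the two systems a complementary one-sided linear inequality on $b_k$, so that their simultaneous feasibility would force $b_k$ to lie on a measure-zero locus ruled out by general position.

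First I would unpack the inequality constraints defining $S_{i,j,j,k}(b)$: they force any solution $x$ to satisfy $x_i = \max_m x_m$, $x_j = \min_m x_m$, $x_j - b_j = \max_m(x_m - b_m)$, and $x_k - b_k = \min_m(x_m - b_m)$, while the equality rearranges to $x_i = 2x_j - x_k + b_k - b_j$. The key move is to apply both the ``$x_k - b_k$ is the min of $x_m - b_m$'' condition and the ``$x_i$ is the max of $x_m$'' condition at the single auxiliary index $m = \ell$, yielding the chain
\[
x_k + b_\ell - b_k \;\leq\; x_\ell \;\leq\; x_i \;=\; 2x_j - x_k + b_k - b_j,
\]
which after rearrangement reads $x_k - x_j \leq b_k - \tfrac{1}{2}(b_j + b_\ell)$. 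Combining with $x_k \geq x_j$ gives the necessary condition $b_k \geq \tfrac{1}{2}(b_j + b_\ell)$.

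Next I would run the symmetric argument for $S_{\ell,i,k,\ell}(b)$: any solution $y$ has $y_\ell = \max y$, $y_i = \min y$, $y_k - b_k = \max(y_m - b_m)$, $y_\ell - b_\ell = \min(y_m - b_m)$, and the equality rearranges to $y_i = 2y_\ell - y_k - b_\ell + b_k$. Applying the ``$y_k - b_k$ is max'' condition and the ``$y_i$ is min'' condition at the dual auxiliary index $m = j$ produces the reverse chain $2y_\ell - y_k - b_\ell + b_k \leq y_j \leq y_k + b_j - b_k$, forcing $y_\ell - y_k \leq \tfrac{1}{2}(b_j + b_\ell) - b_k$; coupled with $y_\ell \geq y_k$, this yields the opposite bound $b_k \leq \tfrac{1}{2}(b_j + b_\ell)$.

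If both $S_{i,j,j,k}(b)$ and $S_{\ell,i,k,\ell}(b)$ were simultaneously nonempty, $b_k$ would have to equal $\tfrac{1}{2}(b_j + b_\ell)$ exactly, a codimension-one linear condition on $b$ excluded by the general-position hypothesis. The main subtlety is recognizing which auxiliary index to plug into the inequality constraints in each case, namely $m = \ell$ for the first system and $m = j$ for the second; once chosen, the algebra is routine substitution, and the complementarity of the two resulting half-plane conditions on $b_k$ is what drives the proof. A small bookkeeping check is that $\ell \notin \{i,j,k\}$ and $j \notin \{i,k,\ell\}$, which follows from $i \neq j$, $k \neq \ell$, the implicit requirement $i \neq \ell$ for $S_{\ell,i,k,\ell}$ to be defined, and the distinctness of the coordinates of $b$.
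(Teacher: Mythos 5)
Your proof is correct, but the endgame differs from the paper's. Both arguments extract the same bound from the first system: your $b_k \geq \tfrac{1}{2}(b_j + b_\ell)$ is exactly the paper's inequality $b_j + b_\ell - b_k \leq b_k$. For the second system, however, the paper extracts a bound involving the remaining index $i$, namely $b_i \leq b_j + b_\ell - b_k$, combines it with the first bound to get $b_i \leq b_k$, and contradicts the assumption $b_k < b_i$ -- an assumption it may make without loss of generality by invoking the symmetry \Cref{lem: pairs of nonempty} (swapping $i$ and $k$ swaps both systems to their mirror images). You instead extract from the second system the bound $b_k \leq \tfrac{1}{2}(b_j + b_\ell)$, using only the indices $j,k,\ell$, so that simultaneous feasibility forces the codimension-one coincidence $2b_k = b_j + b_\ell$, which you discard by genericity. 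Your route is self-contained (it never uses \Cref{lem: pairs of nonempty} or a WLOG reduction), which is a small structural gain; the trade-off is that it needs the general-position hypothesis to exclude the hyperplane $2b_k = b_j + b_\ell$, whereas the paper's argument only needs the coordinates of $b$ to be distinct. This stronger use of genericity is harmless here -- the paper's standing ``general position'' assumption is exactly meant to exclude such linear coincidences (compare the $d+1=3$ example, where $b_1 = 2b_2 - b_3$ is treated as non-generic) -- but it is worth noting that the paper's version of the lemma holds under the weaker hypothesis. Your bookkeeping remark on why $\ell \notin \{i,j,k\}$ and $j \notin \{i,k,\ell\}$ is fine, though your two chains of inequalities are valid for any auxiliary index, so nothing actually hinges on it.
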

\begin{proof}
Let $b \in \trt$ be generic and without loss of generality suppose that $b_1 = \min \{b\}$ and $b_{d+1} = \max \{b\}$. For the sake of contradiction, suppose that there exists some $m,n \in [d+1]$ such that $S_{m,1,1,n}(b) \neq \emptyset$ and $S_{d+1,m,n,d+1}(b) \neq \emptyset$. From \Cref{lem: pairs of nonempty} we have that $S_{m,1,1,n} (b) \neq \emptyset$ if and only if $S_{n,1,1,m}(b) \neq \emptyset$, therefore without loss of generality, assume that $b_n < b_m$.

Since $S_{m,1,1,n} \neq \emptyset$, there exists some $x \in \trt$ such that
\begin{align}
    x_m - x_1 &= x_1 - x_n - b_1 + b_n \label{eqq1}\\
    x_m - x_1 &\geq x_m - x_n \label{eqq2}\\
    x_m - x_1 &\geq x_{d+1} - x_1 \label{eqq3}\\
    x_1 - x_n - b_1 + b_n &\geq x_1 - x_{d+1} - b_1 + b_{d+1}. \label{eqq4}
\end{align}
 Rearranging \eqref{eqq4} gives that $-x_n + x_{d+1} \geq - b_n + b_{d+1}$. Equations \eqref{eqq2} and \eqref{eqq3} implies that $x_m - x_1 \geq -x_n + x_{d+1} $. Combining these inequalities gives that $x_m - x_1 \geq -b_n + b_{d+1}$. The equality in \eqref{eqq1} then gives that
 $x_1 - x_n - b_1 + b_n \geq - x_n + x_{d+1}$. This implies 
 \[ b_1 - b_n + b_{d+1} \leq x_1 - x_n + b_n \leq b_n, \]
where the last inequality comes from \eqref{eqq2}.

Now since $S_{d+1,m,n,d+1} (b) \neq \emptyset $ there exists some $y \in \trt$ such that 
\begin{align*}
    y_{d+1} - y_m &\geq y_n - y_1 \\
    y_n - y_{d+1} - b_n + b_{d+1} &\geq y_1 - y_m - b_1 + b_m. 
\end{align*}
This then gives that 
\[ y_1 - y_m - b_1 + b_m \leq y_n - y_{d+1} - b_n + b_{d+1} \leq y_1 - y_m - b_n + b_{d+1}.\]
Simplifying this expression implies $b_m \leq b_1 - b_n + b_{d+1}$. Combining these results gives that $b_m \leq b_n$ which is a contradiction. 
\end{proof}

We organize the preceding lemmas into the following theorem.
\begin{theorem}\label{thm: big thm}
    \begin{enumerate}[label = (\alph*)]
    \item[] 
        \item If $S_{i,j,k,\ell}(b) \neq \emptyset$ then $b_j \leq b_i$, $b_k \leq b_\ell$,  $b_k\leq b_i$ and $b_j \leq b_\ell$. 
        \item\label{thm b} If $S_{i,j,j,\ell}(b) \neq \emptyset$ then $b_j = \min 
        \{b\}$ 
        and $b_j \leq 2b_\ell - b_i$.
        \item\label{thm c} If $S_{i,j,k,i}(b) \neq \emptyset $ then $b_i = \max \{b\} $ 
        and $b_j \geq 2 b_k - b_i$.
        \item If $b \in \trt$ is generic, $b_j = \min \{b\}$ and $b_i = \max \{b\}$ then at most one of $S_{m,j,j,n}(b)$ and $S_{i,m,n,i}(b)$ is non-empty.
    \end{enumerate}
\end{theorem}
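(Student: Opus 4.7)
The plan is to observe that this theorem is essentially an organized restatement of the lemmas established earlier in this section, so the proof reduces to invoking each lemma in turn and, in one place, stitching two of them together.

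First I would handle part (a) by combining \Cref{lem: bis non empty bijkl order} with the final assertion of \Cref{lem: pairs of nonempty}. The former directly supplies three of the four inequalities, namely $b_j \leq b_i$, $b_k \leq b_\ell$, and $b_k \leq b_i$, while the latter provides the remaining inequality $b_j \leq b_\ell$. Together these exhaust the claim, with no new argument required.

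For parts (b) and (c) I would appeal directly to \Cref{lem:max and min}. Specifically, part (b) of the theorem is the $j = k$ specialization of the lemma's part (b): the lemma's set $\Sijkl$ becomes $S_{i,j,j,\ell}$, and the two conclusions $b_j = \min\{b\}$ and $b_j \leq 2b_\ell - b_i$ are then exactly what the theorem asserts. Analogously, part (c) of the theorem is the $i = \ell$ specialization of the lemma's part (a), yielding $b_i = \max\{b\}$ and $b_j \geq 2b_k - b_i$. In each case I would simply invoke the lemma and remark that the coincidence of indices in the quadruple matches the corresponding hypothesis.

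Finally, part (d) is immediate from \Cref{lem: sijjk or slikl}; the only care required is matching the indexing, so I would relabel the roles of the extremal coordinates (the lemma writes its proof with $b_1 = \min\{b\}$ and $b_{d+1} = \max\{b\}$) to line up with the theorem's $b_j$ and $b_i$, giving the pair of quadruples $S_{m,j,j,n}$ and $S_{i,m,n,i}$ as stated. There is no substantive obstacle anywhere in the proof; the only thing to watch is the repeated-index bookkeeping when specializing \Cref{lem:max and min} and \Cref{lem: sijjk or slikl}, since the theorem rewrites the quadruples with the duplicated coordinate in a fixed position rather than leaving the equality as a side condition.
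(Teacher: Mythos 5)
Your proposal is correct and matches the paper exactly: the theorem is presented there as a mere organization of \Cref{lem: bis non empty bijkl order}, \Cref{lem:max and min}, \Cref{lem: pairs of nonempty}, and \Cref{lem: sijjk or slikl}, with no further argument, and your index bookkeeping (the $j=k$ and $i=\ell$ specializations, and the relabeling of the extremal indices in part (d)) is accurate.
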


We now use \Cref{thm: big thm} to give information on the geometry of $\bis(b)$. We begin by presenting the main theorem of this section, an upper bound on $\mathcal{C}(b)$. 

\begin{theorem}\label{cor: bound}
   For generic $b \in \trt$, $\mathcal{C}(b) \leq 4 \cdot \binom{d+1}{4} +  2 \cdot \binom{d+1}{3} + 2 \cdot \binom{d}{2}  + 1$. \label{lem:last-bound}
\end{theorem}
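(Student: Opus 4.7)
The plan is to classify the quadruples $(i, j, k, \ell)$ with $i \neq j$ and $k \neq \ell$ by the size $n := |\{i, j, k, \ell\}| \in \{2, 3, 4\}$, bound the number of non-empty $\Sijkl(b)$ in each class using \Cref{thm: big thm}, and sum. The four terms of the claimed bound correspond to the sub-cases $n = 4$, $n = 3$ with the repeated index being $i = k$ or $j = \ell$, $n = 3$ with $j = k$ or $i = \ell$, and $n = 2$, respectively.

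For $n = 4$ I apply \Cref{thm: big thm}(a): the inequalities $b_j \leq b_i$, $b_k \leq b_\ell$, $b_j \leq b_\ell$, $b_k \leq b_i$ force $\{i, \ell\}$ to be the two $b$-largest and $\{j, k\}$ the two $b$-smallest elements of the supporting 4-subset, giving $2 \times 2 = 4$ feasible orderings per 4-subset for a total of $4\binom{d+1}{4}$. For $n = 3$ with the repeated index being $i = k$ or $j = \ell$, \Cref{thm: big thm}(a) again pins down a unique linear order on the three distinct indices per 3-subset (for instance $b_j \leq b_i \leq b_\ell$ when $i = k$), contributing $\binom{d+1}{3}$ per sub-case and $2\binom{d+1}{3}$ in total. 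For $n = 2$, the admissible patterns are $S_{i,j,i,j}$, whose defining equation reduces to $b_i = b_j$ and hence is empty for generic $b$, and $S_{i, j, j, i}$, which by \Cref{lem:max and min} is non-empty only when $i = i_0 := \argmax b$ and $j = j_0 := \argmin b$, contributing exactly $1$.

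The delicate step is the $n = 3$ sub-case with $j = k$ or $i = \ell$. By \Cref{thm: big thm}(b),(c), these force $j = j_0$ or $i = i_0$, so the relevant objects are the \emph{$2d$-forms} $S_{m, j_0, j_0, n}$ and \emph{$2c$-forms} $S_{i_0, m, n, i_0}$, with $m \neq n$. I split into a non-boundary regime, $m, n \in [d+1] \setminus \{i_0, j_0\}$, where both forms define valid $n = 3$ quadruples and \Cref{thm: big thm}(d) allows at most one of each matched pair to be non-empty, yielding $\leq (d-1)(d-2)$; and a boundary regime, where one of the two forms degenerates. In the boundary regime, \Cref{lem: pairs of nonempty} groups the surviving quadruples at each $a \in [d+1] \setminus \{i_0, j_0\}$ into a $2d$-pair $\{S_{i_0, j_0, j_0, a}, S_{a, j_0, j_0, i_0}\}$ and a $2c$-pair $\{S_{i_0, j_0, a, i_0}, S_{i_0, a, j_0, i_0}\}$. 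Applying \Cref{thm: big thm}(b) to the $2d$-pair forces $b_a \geq \tfrac12(b_{i_0} + b_{j_0})$, while (c) applied to the $2c$-pair forces $b_a \leq \tfrac12(b_{i_0} + b_{j_0})$. For generic $b$ these thresholds are mutually exclusive, so at most one pair can contribute per $a$, giving a boundary total $\leq 2(d-1)$. Summing the two regimes: $(d-1)(d-2) + 2(d-1) = d(d-1) = 2\binom{d}{2}$.

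Adding the four contributions produces the claimed bound. The main obstacle is precisely this last sub-case: the naive count of $2c$ plus $2d$ quadruples is $2d(d-1)$, and collapsing it to $2\binom{d}{2}$ requires combining the involution of \Cref{lem: pairs of nonempty}, the midpoint thresholds from \Cref{thm: big thm}(b),(c), and the mutual exclusion of (d) simultaneously, since no single one of these is tight enough on its own.
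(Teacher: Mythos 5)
Your proposal is correct, and it follows the same overall strategy as the paper: stratify by the number of distinct indices, use \Cref{thm: big thm}(a) to get $4\binom{d+1}{4}$ from the four-index case and $2\binom{d+1}{3}$ from the three-index case with $i=k$ or $j=\ell$, and use the last part of \Cref{lem:max and min} for the single two-index component. Where you genuinely diverge is the sub-case $j=k$ or $i=\ell$, and there your treatment is in fact more complete than the paper's. The paper bounds this contribution by $2\binom{d}{2}$ by asserting that ``the top and bottom pairs are mutually exclusive'' and citing \Cref{lem: sijjk or slikl}; but that lemma couples $S_{m,j_0,j_0,n}$ with $S_{i_0,m,n,i_0}$ for the \emph{same} free pair $(m,n)$, i.e.\ it matches a top pair supported on the triple $\{j_0,m,n\}$ with a bottom pair supported on the \emph{different} triple $\{i_0,m,n\}$, and it gives no information when $\{m,n\}$ meets $\{i_0,j_0\}$, since one of the two systems is then not admissible. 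By itself this only yields a bound of $2\binom{d-1}{2}+4(d-1)$ for this sub-case, which exceeds $2\binom{d}{2}$ by $2(d-1)$. Your boundary argument---using the second inequalities $b_j\le 2b_\ell-b_i$ and $b_j\ge 2b_k-b_i$ of \Cref{thm: big thm}(b),(c), together with the involution of \Cref{lem: pairs of nonempty}, to show that for each $a\notin\{i_0,j_0\}$ the pair $\{S_{i_0,j_0,j_0,a},\,S_{a,j_0,j_0,i_0}\}$ forces $b_a\ge\tfrac12(b_{i_0}+b_{j_0})$ while $\{S_{i_0,j_0,a,i_0},\,S_{i_0,a,j_0,i_0}\}$ forces $b_a\le\tfrac12(b_{i_0}+b_{j_0})$, so generically at most one pair survives per $a$---is exactly what is needed to bring the count down to $(d-1)(d-2)+2(d-1)=2\binom{d}{2}$; it is also consistent with how the paper's $d+1=3$ example separates its two generic types via the condition $b_1\gtrless 2b_2-b_3$. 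So: same decomposition and the same lemmas, but your handling of the triples containing both the global maximum and minimum supplies a step that the paper's own proof leaves implicit, and without which its stated count for this sub-case is not justified.
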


\begin{proof}
Since $b \in \trt$ is generic, we assume all coordinates of $b$ are distinct and we count how many $S_{i,j,k,\ell}(b)$ can be non-empty. We consider the following mutually exclusive cases based on how many indices of $S_{i,j,k,\ell}(b)$ are distinct.

\begin{enumerate}
    \item Four indices are distinct. 
    
\item[] Without loss of generality, assume $b_i<b_j<b_k<b_\ell$. In this case, of the $24$ $S_{i,j,k,\ell}(b)$ with distinct indices, \Cref{thm: big thm} gives that there are four that are potentially non-empty: $S_{k,i,j,\ell},S_{\ell,i,j,k},S_{k,j,i,\ell},S_{\ell,j,i,k}$. This gives a total of 
\[ 4 \cdot \binom{d+1}{4}\]
potential non-empty linear components.
\item Three indices are distinct.
\item[] Without loss of generality, assume $b_i < b_j < b_k$. In this case, of the $24$ $S_{i,j,k,\ell}(b)$ with three distinct indices, \Cref{thm: big thm} gives that there are at most $6$ non-empty: 
\[S_{j,i,i,k}(b), \ \ S_{k,i,i,j}(b),\]\[S_{j,i,j,k}(b), \ \ S_{k,j,i,j}(b),\]\[  S_{k,i,j,k}(b), \ \ S_{k,j,i,k}(b).\]

The top pair $S_{j,i,i,k}(b), S_{k,i,i,j}(b) $ are both empty or both non-empty by \Cref{lem: pairs of nonempty}, and by \Cref{lem:max and min}, both are only non-empty only if $b_i = \min\{b\}$. There are $2\cdot {d \choose 2}$ remaining possible choices of $b_j$ and $b_k$.
Similarly for the bottom pair, $S_{k,i,j,k}(b)$ and $S_{k,j,i,k}(b)$ are simultaneously empty or non-empty and they are non-empty only if $b_k = \max \{b\}$. There are $2\cdot {d\choose 2}$ of these remaining choices as well. By \Cref{lem: sijjk or slikl}, the top and bottom pairs are mutually exclusive, as we have that at most one of $S_{j,i,i,k}(b)$ and $S_{k,i,j,k}(b)$ can be non-empty. This gives a total of $2 \cdot \binom{d}{2}$ potential non-empty linear components from the top and bottom row. 
 
We have no results restricting $S_{j,i,j,k}(b)$ or $S_{k,j,i,j}(b) $ based on the previous sets or each other, so we count the total ${d+1 \choose 3}$ possible choices of $b_i<b_j<b_k$ for each. Therefore, when three of the indices for $S_{i,j,k,\ell}(b)$ are distinct, there are at most 
\[ 2 \cdot \binom{d+1}{3} + 2 \cdot \binom{d}{2} \] non-empty linear components.

\item Two indices are distinct.
\item[] Without loss of generality, assume $b_i < b_j$. Then the only potentially non-empty linear system is $S_{j,i,i,j}(b)$. By \Cref{lem:max and min}, this will be non-empty if and only if $b_i = \min\{b\}$ and $b_j = \max\{b\}$. Therefore, this case adds exactly one linear component to $\bis(b)$. 
\item[] 
\item[] Adding together all of the above cases gives that 
\[ \mathcal{C}(b) \leq 4 \cdot \binom{d+1}{4} +  2 \cdot \binom{d+1}{3} + 2 \cdot \binom{d}{2}  + 1 \]
\[ = \frac{d^4}{6}+\frac{5d^2}{6} - d + 1.\]
\end{enumerate}

\end{proof}

Finally we note that $\bis(b)$ generically will have an odd number of components.

\begin{corollary}
    For generic $b$, $\mathcal{C}(b)$ is odd.
\end{corollary}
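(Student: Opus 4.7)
The plan is to exploit the involution $\sigma\colon (i,j,k,\ell) \mapsto (\ell,k,j,i)$ on ordered 4-tuples with $i\neq j$ and $k\neq \ell$. By \Cref{lem: pairs of nonempty}, the sets $S_{i,j,k,\ell}(b)$ and $S_{\sigma(i,j,k,\ell)}(b) = S_{\ell,k,j,i}(b)$ are simultaneously empty or non-empty. Hence the non-empty components of $\bis(b)$ are organized into $\sigma$-orbits, and the contribution of each orbit to $\mathcal{C}(b)$ is controlled by its size.

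First, I would identify the fixed points of $\sigma$. A 4-tuple $(i,j,k,\ell)$ satisfies $\sigma(i,j,k,\ell) = (i,j,k,\ell)$ if and only if $i = \ell$ and $j = k$, i.e. it has the form $(i,j,j,i)$. Every other admissible 4-tuple lies in a $\sigma$-orbit of size $2$, and in the generic case---where distinct 4-tuples give rise to distinct linear pieces of $\bis(b)$, following the enumeration in the proof of \Cref{cor: bound}---these size-$2$ orbits jointly contribute an even number to $\mathcal{C}(b)$.

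Next, I would count the contribution of fixed points using \Cref{lem:max and min}, which gives that $S_{i,j,j,i}(b) \neq \emptyset$ if and only if $b_i = \max\{b\}$ and $b_j = \min\{b\}$. Since $b$ is generic, its coordinates are distinct, so the maximum and the minimum are each attained at a unique index. Thus exactly one fixed-point component is non-empty, and $\mathcal{C}(b) = (\text{even}) + 1$ is odd.

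The main obstacle I anticipate is a bookkeeping subtlety rather than a deep issue: one must verify that in a $\sigma$-orbit of size $2$, the two index tuples really do correspond to two distinct linear pieces of $\bis(b)$ under the convention used to define $\mathcal{C}(b)$, so that each non-empty size-$2$ orbit contributes $2$ and not $0$ or $1$. This is consistent with the enumeration in \Cref{cor: bound}, where in both the four-distinct-index case and the three-distinct-index case the paired $S_{i,j,k,\ell}(b)$ and $S_{\ell,k,j,i}(b)$ are counted separately, so invoking that same counting convention closes the argument.
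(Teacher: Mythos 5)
Your proposal is correct and follows essentially the same route as the paper: pair up the components $S_{i,j,k,\ell}(b)$ and $S_{\ell,k,j,i}(b)$ via \Cref{lem: pairs of nonempty}, and observe via \Cref{lem:max and min} that the only fixed tuples $(i,j,j,i)$ contribute exactly one non-empty component since the maximum and minimum of a generic $b$ are attained at unique indices. Your explicit treatment of the involution's fixed points and of the counting convention just makes precise what the paper's two-line proof leaves implicit.
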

\begin{proof}
    From \Cref{lem: pairs of nonempty}, we have that so long as $i \neq \ell$ or $j \neq k$, that number of linear components come in pairs. Then from \Cref{lem:max and min}, we get a single component.
\end{proof}

Based on computational results in the following section, which achieve the upper bound of \Cref{cor: bound} for a subset of generic $b$ vectors in all dimensions tested, we conjecture that the bound in \Cref{cor: bound} is tight.

\subsection{Computational Results}

\begin{example}
    Consider $d+1 = 3$ and generic $b$ with $b_1 < b_2 < b_3$, then the following matrix represents the only possible sets $S_{i,j,k,l} (b) \neq \emptyset$ where $ \times$ denotes sets that are empty and $\star$ represents sets that are possibly nonempty.
   \[ \begin{array}{c|cccccc}
         (i,j) \slash (k,l) & 12 & 21 & 13 & 31 & 23 & 32  \\ \hline  
        12 & \times & \times & \times & \times & \times & \times \\
        21 &  \times &  \times & \star &  \times & \star &  \times \\
        13 & \times & \times & \times & \times & \times & \times \\
        31 & \star &  \times & \star &  \times & \star &  \times \\
        23 &  \times &  \times &  \times &  \times &  \times &  \times \\
        32 & \star &  \times & \star & \times & \times & \times
    \end{array} \]

From \Cref{thm: big thm} \ref{thm b} and \ref{thm c} we have there cannot be a $\star$ in entries $\{(31),(12)\}$ and $\{(31),(23)\}$. Therefore, in this situation, there are two different types of generic behavior, depending on if  $b_1 > 2b_2 - b_3$ (left table) or if $b_1 < 2b_2 - b_3$ (right table).
   \[ \begin{array}{c|cccccc}
         (i,j) \slash (k,l) & 12 & 21 & 13 & 31 & 23 & 32  \\ \hline  
        12 & \times & \times & \times & \times & \times & \times \\
        21 &  \times &  \times & \times &  \times & \star &  \times \\
        13 & \times & \times & \times & \times & \times & \times \\
        31 & \times &  \times & \star &  \times & \star &  \times \\
        23 &  \times &  \times &  \times &  \times &  \times &  \times \\
        32 & \star &  \times & \star & \times & \times & \times
    \end{array} \quad \begin{array}{c|cccccc}
         (i,j) \slash (k,l) & 12 & 21 & 13 & 31 & 23 & 32  \\ \hline  
        12 & \times & \times & \times & \times & \times & \times \\
        21 &  \times &  \times & \star &  \times & \star &  \times \\
        13 & \times & \times & \times & \times & \times & \times \\
        31 & \star &  \times & \star &  \times & \times &  \times \\
        23 &  \times &  \times &  \times &  \times &  \times &  \times \\
        32 & \star &  \times & \times & \times & \times & \times
    \end{array} \]

In either case, we see that the tropical bisector of two generic points in $\mathbb{R}^3 \slash \mathbb{R} \mathbf{1}$ has five linear segments. Observe that that in this case, the bound given by \Cref{cor: bound} is tight.
\end{example}

\begin{example}
    Consider $d + 1 = 4$ and without loss of generality take $b_1 < b_2 < b_3 < b_4$. In this case $S_{i,j,k,\ell} = \emptyset$ if $(i,j) \in \{(1,2),(1,3),(1,3),(2,3),(2,4),(3,4) \}$ or $(k, \ell) \in \{(2,1),(3,1),(3,2),(4,1),(4,2),(4,3)\}$. This then gives us $36$ options. Using \Cref{cor: bound} we have that there are at most $19$ components. In computation we observed that for generic $b \in \trt$, there were either $17$ or $19$ components. When there are $17$ components, the linear components come from the following systems:

      \[ \begin{array}{c|cccccc}
         (i,j) \slash (k,l) & 12 & 13 & 23 & 14 & 24 & 34  \\ \hline  
        21 & \times & \times & \star & \times & \star & \times \\
        31 &  \times &  \times & \times &  \star & \star &  \star \\
        32 & \star & \times & \times & \star & \times & \star \\
        41 & \times &  \star & \star &  \star & \star &  \times \\
        42 &  \star &  \star &  \times &  \star &  \times &  \times \\
        43 & \times &  \star & \star & \times & \times & \times
    \end{array} \]

When there are $19$ components, then two situations occur:

      \[ \begin{array}{c|cccccc}
         (i,j) \slash (k,l) & 12 & 13 & 23 & 14 & 24 & 34  \\ \hline  
       21 & \times & \star & \star & \star & \star & \times \\
        31 &  \star &  \times & \times &  \star & \star &  \star \\
        32 & \star & \times & \times & \star & \times & \star \\
        41 & \star &  \star & \star &  \star & \times &  \times \\
        42 &  \star &  \star &  \times &  \times &  \times &  \times \\
        43 & \times &  \star & \star & \times & \times & \times
    \end{array} \quad \begin{array}{c|cccccc}
         (i,j) \slash (k,l) & 12 & 13 & 23 & 14 & 24 & 34  \\ \hline  
        21 & \times & \times & \star & \times & \star & \times \\
        31 &  \times &  \times & \times &  \times & \star &  \star \\
        32 & \star & \times & \times & \star & \times & \star \\
        41 & \times &  \times & \star &  \star & \star &  \star \\
        42 &  \star &  \star &  \times &  \star &  \times &  \star \\
        43 & \times &  \star & \star & \star & \star & \times
    \end{array} \]
In any case, we observe the following systems are always nonempty:
      \[ \begin{array}{c|cccccc}
         (i,j) \slash (k,l) & 12 & 13 & 23 & 14 & 24 & 34  \\ \hline  
        21 &   &   & \star &   & \star &   \\
        31 &   &    &   &    & \star &  \star \\
        32 & \star &   &  & \star &   & \star \\
        41 &   &    & \star &  \star &    &   \\
        42 &  \star &  \star &   &    &   &   \\
        43 &  &  \star & \star &  &  &  
    \end{array} \]
We observe again that our bound in \Cref{cor: bound} is tight.

 \end{example}

In \Cref{tab:experiment} and \Cref{fig:experiment-bar} we give computational results on the number of times the bisector has $X$ components when sampling each coordinate of $b \in \trt$ as $b_i \sim \mathcal{N}(0,1)$ in $1000$ trials.

\begin{table}[h!]
    \centering
    %\small 
    \tiny
     \begin{tabular}{|c|c|c|c|c|c|c|c|c|c|}
        \hline  $d+1$ & 3 & 4 & 5 & 6 & 7 & 8 & 9 & 10 & 11 \\ \hline 
        $X$& $5 : 100$  & \shortstack{$17: 47.9 $ \\ $19: 52.1$} & \shortstack{$49 : 68$ \\ $53 : 32$} & \shortstack{$113 : 32.4$ \\ $115 : 50$ \\ $121 : 17.6$} & \shortstack{$229 : 57$ \\ $233 : 32.6$ \\ $241 : 10.4$} & \shortstack{$417 : 27.6$ \\ $419 : 44.3$ \\ $425 : 22.1$ \\ $435 : 6$}&  \shortstack{$705 : 47.5$ \\ $709: 32$ \\ $717:16$ \\ $729: 4.5$} & \shortstack{$1121 : 21$ \\ $1123 : 39.4$ \\ $1129: 24.9$ \\ $1139: 10.9$ \\ $1153: 3.8$} & \shortstack{$1701: 37.6$ \\ $1705 : 32.7$ \\ $1713 : 18.8$ \\ $1725 : 8.6$ \\ $1741 : 2.3$} \\ \hline 
        \shortstack{Bound in  \\Thm. \ref{cor: bound}} & 5 & 19 & 53 & 121 & 241 & 435& 729 & 1153 & 1741\\ \hline
    \end{tabular}
    \caption{The percentage of times the bisector has $X$ components when $b \in \mathbb{R}^d$ where each $b_i \sim \mathcal{N}(0,I)$ is independently sampled in $1000$ trials and the bound derived in \Cref{cor: bound}.}
     \label{tab:experiment}
\end{table}

\begin{figure}
    \centering
    \includegraphics[width=0.48\linewidth]{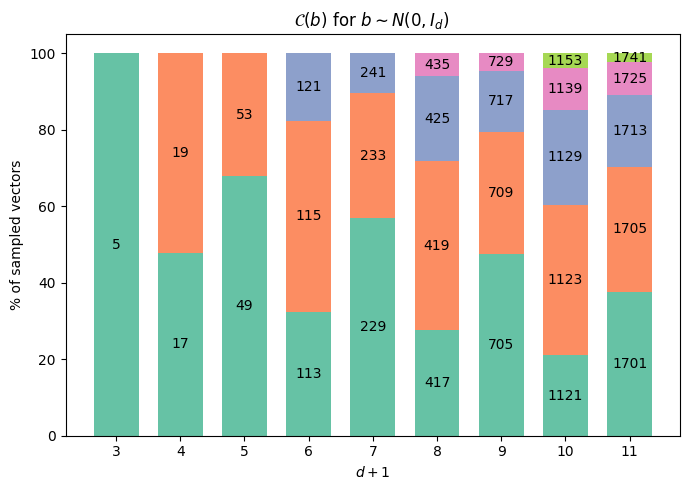}
        \caption{The distribution of $\mathcal{C}(b)$ for 1000 independently sampled $b \in \mathbb{R}^d$, $b_i \sim \mathcal{N}(0,I)$. The experimental data achieves the upper bound from \Cref{lem:last-bound} in a decreasing minority of cases as $d$ increases. We note that even when the upper bound is not realized, $\mathcal{C}(b)$ has a relatively small range of values for generic $b$.}
    \label{fig:experiment-bar}
\end{figure}

\subsection{Examples of tropical bisectors and their interaction with the gradient}

Inspired from the proof of \cite[Proposition 4]{CJS}, we have implemented an algorithm in SageMath \cite{sagemath}, that plots the bisector of any two given points in the plane. For examples of such plots see \Cref{fig:minus1.5} and \Cref{fig:w2}.

\begin{figure}[h]
    \centering
    \includegraphics[height=4cm]{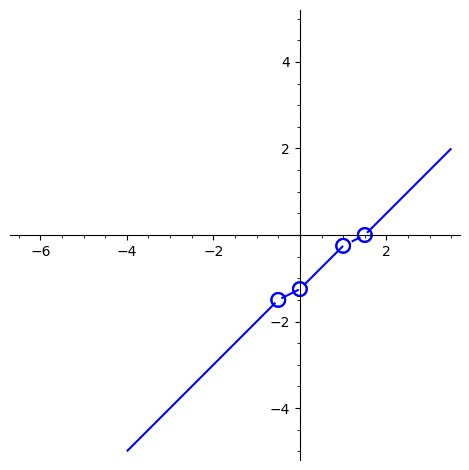}
    \caption{Bisector of two points in the plane: $(0,0,0)$ and $(1,-1.5,0)$.}
    \label{fig:minus1.5}
\end{figure}

\begin{figure}[h]
    \centering
    \includegraphics[height=4cm]{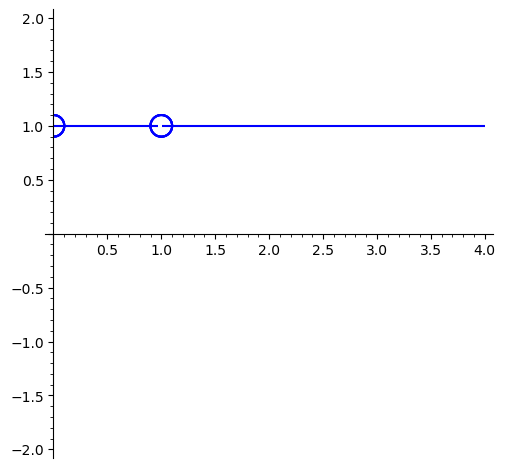}
    \caption{Bisector of two points in the plane: $(0,0,0)$ and $(1,2,0)$.}
    \label{fig:w2}
\end{figure}

Moreover, using this algorithm we can see 
how the tropical bisector interacts with the gradient of the Carlini-Wagner attack. An example of this is given in \Cref{fig:enter-label}. More precisely, given two classes centered at the purple and black points, 
we see the piecewise linear tropical bisector.  
Further, we suppose that the blue point is the starting point of the Carlini-Wagner attack. The top left of figure in \Cref{fig:enter-label} illustrates Carlini-Wagner gradients using the objective function $\| \delta \|_2 + f(x + \delta)$ compared to the tropical bisector. Here we define 
\[ f(x) = \max \{ 0, (\max_{i \neq t} Z(x)_i) - Z(x)_t \} \]
where $Z(x)$ is the output of all layers of the tropical CNN except the softmax layer.  
In the top right figure in \Cref{fig:enter-label} we show Carlini-Wagner gradients using $d_\Trop(\delta) + f (x + \delta)$. Notice that the gradient moves toward the bisector, but does not vanish immediately upon crossing it in some segments. On the bottom of \Cref{fig:enter-label}, we show a close-up view of the gradient near the decision boundary.

\begin{figure}[h]
    \centering
    \includegraphics[width = 0.49\linewidth]{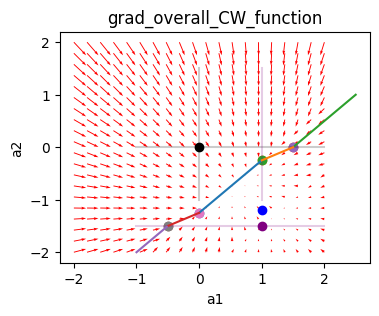}\includegraphics[width=0.49\linewidth]{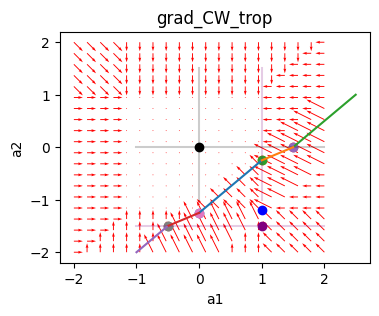}
    \includegraphics[width=0.49\linewidth]{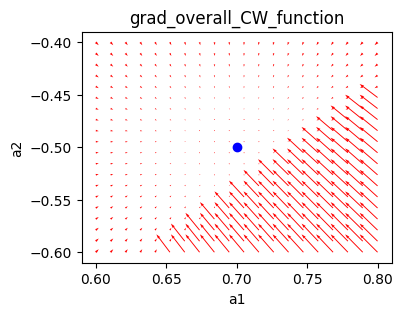}
    \includegraphics[width=0.49\linewidth]{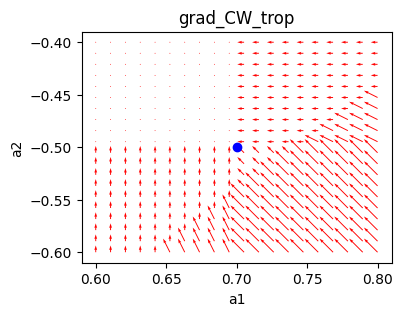}
    \caption{Interaction of bisector with the gradient. }
    \label{fig:enter-label}
\end{figure}

\section{Carlini-Wagner Attacks}
The study conducted by \cite{Pasque2024} indicates that incorporating a tropical layer as defined in \eqref{def:trop_embedding_layer} can improve the robustness of a convolutional neural network against various state-of-the-art attacks, such as the Carlini-Wagner attack \cite{carlini2017evaluating}. This section investigates why the tropical layer improves robustness, and based on our results, suggests a refined version of the Carlini-Wagner attack, specifically targeted to the tropical architecture.
Before going into more details, we provide a short summary on the CW attack and the tropical network, and introduce some notation.
\subsection{Tropical CNNs}
By a tropical CNN we refer to the combination of a classic CNN, called base model henceforth, plus a tropical layer on top, directly preceding the soft-max layer. More precisely, let $x$ denote the input vector, and for a tropical CNN with $n$ layers, we call the ($n$-th) tropical embedding layer $F_n$, and the (pre-softmax) output of the "classical" CNN $Z_{n-1}(x)$. The prediction of the overall tropical network is their convolution, that is
$$\hat{y} = \text{softmax} \circ F_n \circ Z_{n-1}(x)$$
In the following, we write $Z = Z_n = F_n \circ Z_{n-1}.$
\subsection{CW attack}
Let $c$ be the true label of input vector $x$. We are interested in finding an adversarially disturbed version $x'$; that is, $x'$ close to $x$ such that $x'$ is unlikely to be classified as $c$\footnote{
In other words, we conduct an \textit{untargeted} attack, aiming to classify $x'$ as any class other than $c$. The Carlini-Wagner attack method also supports \textit{targeted} attacks, which aim to misclassify $x'$ into a specific predefined class $t \neq c$. This is achieved by replacing the function $f$ defined in \eqref{eq:cw_attack} with $\max(\max_{i\neq t}(Z(x^{\prime})_{i})-Z(x^{\prime})_{t},-\kappa)$. For simplicity, we focus on the untargeted attacks.}. Carlini and Wagner suggest to minimize 
\begin{align*}
    g(x') = \|x'-x\|_{2}^{2}+\lambda\cdot f(x')
\end{align*}
for a constant $\lambda>0$ and $f$ some function being below zero if and only if $x'$ is misclassified. They propose six options for $f$, but focus on $f$ defined as
\begin{align}\label{eq:cw_attack}
f(x^{\prime})=\max\left(Z(x^{\prime})_{c}-\max_{i\neq c}(Z(x^{\prime})_{i}),-\kappa\right)
\end{align}
since this choice worked best in their simulations. 
In minimizing $g$, we simultaneously minimize the distance between $x'$ and $x$ and $f$. This rewards an increase in the maximal $Z(x')_i$ relative to $Z(x')_t$, maximizing the likelihood that $x'$ will be classified as $i$ rather than $c$ under softmax. 
For details on why $Z$ and not the post-softmax output is used, and on the purpose of the hyperparameter $\kappa\geq 0$, we refer to the original paper. As in the simulation part of the original paper, we focus on the choice $\kappa=0$ in the following.

Finally, to ensure that $x'$ is indeed a valid image in the sense that each pixel lies in the range $[0, 1]$, they introduce a change of variables $x'=\frac{1}{2}(\tanh(z)+1)$, and find the $z^\star$ minimizing
\begin{align}\label{eq:cw_attack_variable_change}
    g(z) = \left\|\frac{1}{2}(\text{tanh}(z)+1)-x\right\|_{2}^{2}+\lambda\cdot f\left(\frac{1}{2}(\text{tanh}(z)+1)\right).
\end{align}
\subsection{Reason for robustness}
To find the optimum $z^\star$ of $g$, gradient descent is used. While this works very well for most standard CNNs, for tropical CNNs the success rate of CW attacks drops significantly \cite{Pasque2024}. In this section we explain why this is the case: the gradient of a tropical CNN is not continuously differentiable, and in fact depends on a relatively low proportion of input coordinates. This may produce oscillatory effects and obstructs efficient optimization.  
As a simple example, we first consider minimizing
\begin{equation*}
    \max(h(x))
\end{equation*} in $x$ for some continuously differentiable $h: \mathbb{R}^m \to \mathbb{R}^k$.
Suppose we start gradient descent at some $x_0$ with current function value $\max(h(x_0))=y$, and, without loss of generality, assume this maximum occurs at the first coordinate, that is, $h_1(x_0)=y$. At the same time, $h_2(x_0)$ might be only slightly smaller, specifically $h_2(x_0) = y-\delta$. Nevertheless, by chain rule,
\begin{equation*}
    \nabla (\max(h(x_0))) = (e_1^T J_h(x_0))^T = \nabla h_1(x_0),
\end{equation*}
which ignores all coordinate functions $h_i$ except for $h_1$. As a result, the gradient descent moves in a direction that reduces $h_1$, while not considering the values of the other coordinate functions of $h$. In particular, $h_2$ could increase and now be slightly larger than the reduced $h_1$. Hence, in the second step, the gradient descent leads to a decrease $h_2$ and again ignores all other components, so $h_1$ could increase again. This back-and-forth can cause the process to oscillate between points $x_n, x_{n+1}$ where $h_1$, $h_2$, are slightly better respectively, but the overall maximum might barely improve. For instance, consider
\begin{equation}\label{ex:oscillation_GD}
    h(x) = \begin{pmatrix} x_1^2 + 2\cdot x_2 \\
    x_1^2 - 2.2\cdot x_2 
    \end{pmatrix}.
\end{equation} Then, $\max(h(x))$ has gradient
\begin{equation*}
    \nabla (\max(h(x))) = \begin{pmatrix}
        2x_1 \\
         2 \text{ if } x_2 > 0,\ -2.2 \text{ otherwise} 
    \end{pmatrix}.
\end{equation*}
and a unique minimum at $(0,0)$. Far from the minimum, gradient descent proceeds correctly. However, close to $(0,0)$, the first gradient component becomes negligible compared to the second, which alternates between 2 and -2.2. This stark difference in scale induces oscillations, as visualized in  \Cref{fig:oscillation_GD}. 
\begin{figure}
    \centering
    \includegraphics[width=0.6\linewidth]{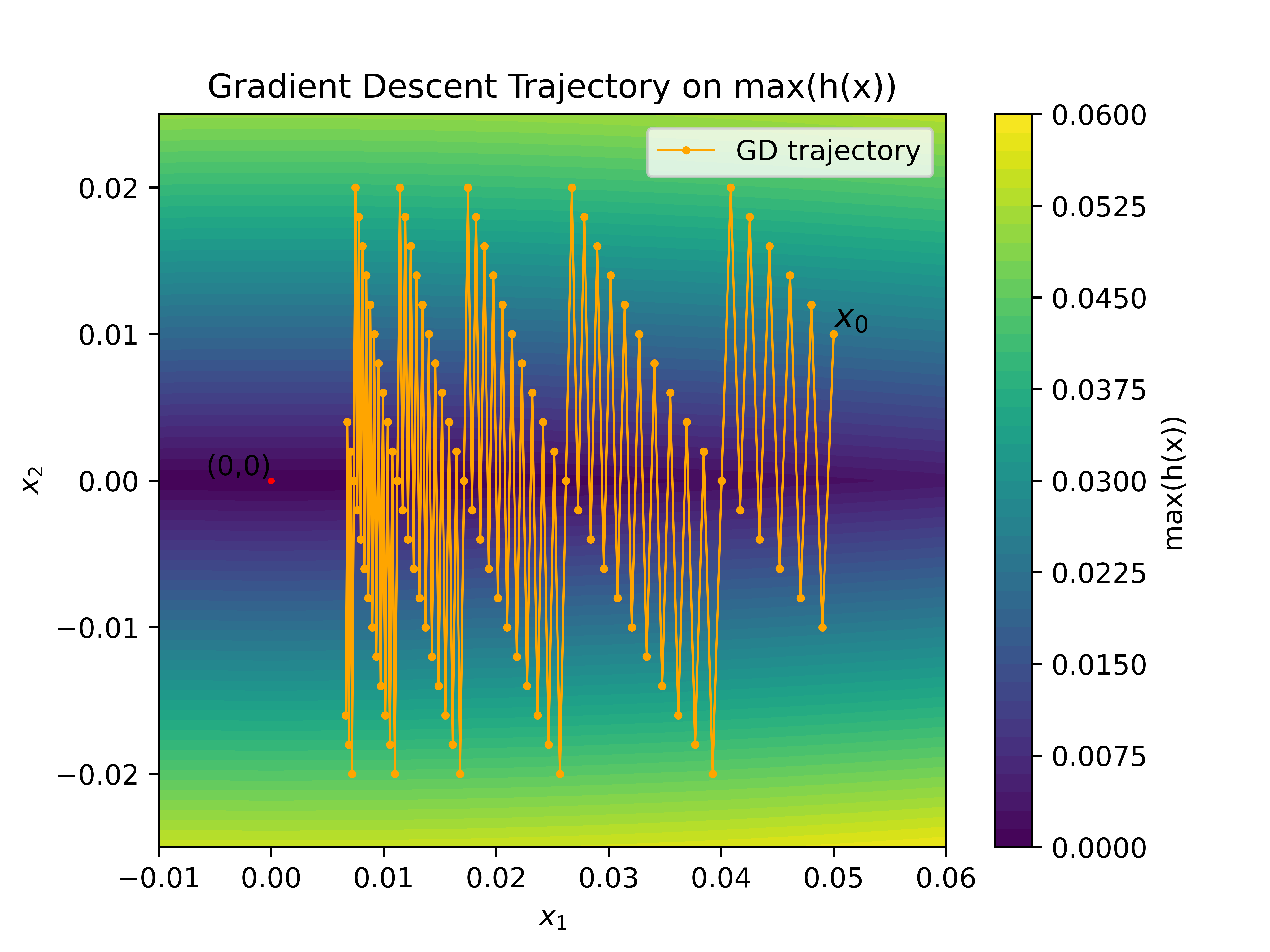}
    \caption{The first 100 iterations of GD on $\max(h(x))$. Due to oscillation,  GD does not converge within these iterations.}
    \label{fig:oscillation_GD}
\end{figure}
Especially in high dimensional image spaces, it is very likely to encounter such an oscillation: for minimizing $\max(h(x))$ for $h: \mathbb{R}^m \to \mathbb{R}^k$ in effect all $k$ component functions need to be minimized, but the gradient descent only decreases one component function in each step and does not control what happens to the other components. See \cite[Section VI.C.]{carlini2017evaluating} for another example; they encountered a similar problem when they replaced the $\ell_2$ norm in \eqref{eq:cw_attack_variable_change} by the maximum norm.

Returning to the tropical CNN, we seek to minimize
\begin{align*}
  f(x) =&\;  d_{tr}(Z_{n-1}(x)-w_{c}) - \max_{i\neq c}(d_{tr}(Z_{n-1}(x)-w_{i})) \\ =& \max((Z_{n-1}(x)-w_{c}) - \min(Z_{n-1}(x)-w_{c}) \\ &- \max_{i\neq c}(\max(Z_{n-1}(x)-w_{i}) + \min(Z_{n-1}(x)-w_{i}))
\end{align*} plus the $\ell_2$ penalization. Now, there occurs not only one maximum but five maxima/minima. Note that the maximum over the classes is usually no problem since this maximum is taken over only a few classes. In contrast, the other four minima/maxima are likely to lead to oscillation. Here, the maximum/minimum is taken over all coordinates of $Z_{n-1}(x)-w_{i}$, resulting in a very sparse gradient in $\mathbb{R}^{\text{length}(Z_{n-1}(x))}$ with only four non-zero entries.

To reveal whether this phenomenon indeed contributes to the robustness found in \cite{Pasque2024}, and if so, how much of the robustness it explains, we experiment with a slightly altered version of gradient descent. 
\subsection{Improved attack by modified gradient descent}
This altered version aims at addressing the issue of considering only four coordinates. Correspondingly, we do not only look at the four coordinates achieving the maxima and minima, but at all coordinates exhibiting values close to the maximum, similar to a suggestion in \cite{carlini2017evaluating}. Specifically, we introduce a threshold $\tau>0$, and replace each maximum $\max(z)$ in the tropical distance by
\begin{equation}
    \sum_{i=1}^d (z_i - \tau)^+,
\end{equation}
where $(x)^+=\max(x, 0)$.
This penalizes all components exceeding the threshold, not just the largest one. By selecting $\tau$ appropriately, both the largest component and those with values close to it will be penalized. Therefore, we control multiple rather than only one component, which ensures a more efficient optimization. 

Similarly, denoting $(x)^-=\min(x, 0)$, we replace minima  by
\begin{equation}
\sum_{i=1}^d (z_i + \tau)^-,
\end{equation} leading to the updated tropical distance
\begin{align*}
\tilde{d}_{\text{tr}}(x, w) = \sum_{i=1}^d (x_i - w_i - \tau)^+ - \sum_{i=1}^d (x_i - w_i + \tau)^-.
\end{align*}
Then, we substitute each occurrence of the tropical distance in $g$ by $\tilde{d}_{\text{tr}}(x, w)$ and compare how this affects the robustness. 

 \Cref{tab:altered_gradient_descent} and \Cref{tab:altered_gradient_descent2} summarize our results on four different networks. Specifically, we use 
 LeNet5 and ModifiedLeNet5 as our base networks and then add either a fully  connected linear layer or a tropical layer as the last layer. All four networks reach a test accuracy above $99\%$ on the MNIST data set. We include the attack with the normal vs. our modified gradient descent, and report the percentage of successful attacks along with the mean $\ell_2$ distances between the adversarial and the original image. Some of the cells remain empty since using the altered gradient descent for a linear top layer does not make sense; this attack is specifically designed against tropical networks. 
 For the precise choice of the networks and the other hyperparameters we refer to the Appendix \ref{subsec:networks_and_hyperparams}. Figure \ref{fig:examples_MNIST} shows some pictures of the MNIST data set along with attacks computed with the altered gradient descent. 
\renewcommand{\arraystretch}{1.5}
\begin{table}[h]
\centering
\begin{tabular}{|c|c|c|c|c|c|}
    \hline
    \multirow{2}{*}{Base model} & \multirow{2}{*}{Top layer} & \multicolumn{2}{c|}{Gradient descent} & \multicolumn{2}{c|}{Altered gradient descent} \\ \cline{3-6}
        & & Attack successrate & Mean $\ell_2$ & Attack successrate & Mean $\ell_2$ \\ \hline
    \multirow{2}{*}{ModifiedLeNet5} & Linear & 97\% & 1.63  & - & -   \\ \cline{2-6}
        & Tropical & 42\% & 1.82 & 63\% & 1.99 \\ \hline
    \multirow{2}{*}{LeNet5} & Linear & 97\% &  2.08 & - & - \\ \cline{2-6}
        & Tropical & 61\% & 2.51 & 72\% & 2.28 \\ \hline
\end{tabular}
\caption{The altered gradient descent increases the attack sucess rate.} 
\label{tab:altered_gradient_descent}
\end{table}
\renewcommand{\arraystretch}{1}

\begin{figure}
    \centering
    \includegraphics[width=0.99\linewidth]{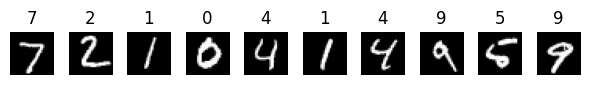}
    \includegraphics[width=0.99\linewidth]{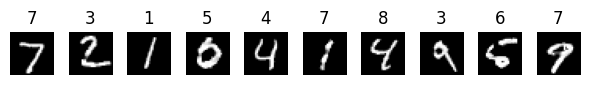}
    \caption{Original images (top) vs. adversarial images (bottom). The number above each image is the prediction of a tropical network with base model LeNet5.}
    \label{fig:examples_MNIST}
\end{figure}

\subsection{Further improvement through multiple starting point gradient descent}
Another way to improve the suggestion made in \cite{carlini2017evaluating} is the so-called multiple starting point gradient descent: normally, when computing the adversarial image, one starts the gradient descent at $x_0 = x$, $x$ being the original image. As an alternative, one can also start at $x_0 = x + \delta$ with a random noise $\delta$ uniformly chosen from $\partial B_r(0)$ for some small $r$. As the name indicates, in practice, one does not only pick one, but multiple random starting points, generating an attacked image from each, and then selecting the best result. We use $10$ such points. Note that this strategy can be combined with the classical as well as with our modified gradient descent since it only affects the choice of $x_0$. By implementing this suggestion, we could improve the attack success rate even more, as shown in the following table.
\renewcommand{\arraystretch}{1.5}
\begin{table}[h]
\centering
\begin{tabular}{|c|c|c|c|c|c|}
    \hline
    \multirow{2}{*}{Base model} & \multirow{2}{*}{Top layer} & \multicolumn{2}{c|}{Gradient descent + MSP} & \multicolumn{2}{c|}{Altered gradient  descent + MSP} \\ \cline{3-6}
        & & Attack successrate & Mean $\ell_2$ & Attack successrate & Mean $\ell_2$ \\ \hline
    \multirow{2}{*}{\parbox{1.8cm}{\centering Modified-\newline LeNet5}} & Linear & 100\% & 1.25  & - &  -  \\ \cline{2-6}
        & Tropical & 59\% & 2.05 & 87\% & 1.59 \\ \hline
    \multirow{2}{*}{LeNet5} & Linear & 100\% &  0.94 & - & - \\ \cline{2-6}
        & Tropical & 73\% & 1.06 & 98\% & 1.26 \\ \hline
\end{tabular}
\caption{Using multiple starting points (MSP), increases the successrate even further. For LeNet5, the altered gradient descent now succeeds almost always.} 
\label{tab:altered_gradient_descent2}
\end{table}

\section*{Acknowledgements}
R. Yoshida is partially supported by the NSF Statistics Program DMS 2409819. 
M.-{\c S}. Sorea is supported by the project ``Mathematical Methods and Models for
Biomedical Applications'' financed by National Recovery and Resilience Plan PNRR-III-C9-2022-I8. D. Schkoda is financed by the European Research Council (ERC) under the European Union’s Horizon 2020 research and innovation programme (grant agreement No 883818) and  supported by the DAAD programme Konrad Zuse Schools of Excellence in Artificial Intelligence, sponsored by the Federal Ministry of Education and Research. G. Grindstaff is supported by EPSRC Centre to Centre Research Collaboration grant EP/Z531224/1.
The authors are grateful to Jane Coons for bringing us together at the University of Oxford during the Workshop for Women in Algebraic Statistics in July 2024. The Workshop was supported by St John's College, Oxford, the L'Oreal-UNESCO For Women in Science UK and Ireland Rising Talent Award in Mathematics and Computer Science (awarded to Jane Coons), the Heilbronn Institute for Mathematical Research, and the UKRI/EPSRC Additional Funding Programme for the Mathematical Sciences.
\bibliographystyle{plain} 
\bibliography{refs}

\begin{thebibliography}{10}

\bibitem{BSYM}
D.~Barnhill, J.~Sabol, R.~Yoshida, and K.~Miura.
\newblock Tropical fermat-weber polytropes, 2024.
\newblock https://arxiv.org/pdf/2402.14287.pdf.

\bibitem{carlini2017evaluating}
Nicholas Carlini and David~A. Wagner.
\newblock Towards evaluating the robustness of neural networks.
\newblock {\em 2017 IEEE Symposium on Security and Privacy (SP)}, pages 39--57,
  2016.

\bibitem{CJS}
F.~Criado, M.~Joswig, and F.~Santos.
\newblock Tropical bisectors and voronoi diagrams.
\newblock {\em Found Comput Math}, 22:1923--1960, 2022.
\newblock https://doi.org/10.1007/s10208-021-09538-4.

\bibitem{Criado_2021}
Francisco Criado, Michael Joswig, and Francisco Santos.
\newblock Tropical bisectors and {V}oronoi diagrams.
\newblock {\em Found. Comput. Math.}, 22(6):1923--1960, 2022.

\bibitem{icking1995convex}
Christian Icking, Rolf Klein, Ng\d oc-Minh L\^e, and Lihong Ma.
\newblock Convex distance functions in {$3$}-space are different.
\newblock volume~22, pages 331--352. 1995.
\newblock Computational geometry (San Diego, CA, 1993).

\bibitem{ETC}
Michael Joswig.
\newblock {\em Essentials of tropical combinatorics}.
\newblock Graduate Studies in Mathematics. American Mathematical Society,
  Providence, RI, 2022.

\bibitem{LeNet5}
Y.~Lecun, L.~Bottou, Y.~Bengio, and P.~Haffner.
\newblock Gradient-based learning applied to document recognition.
\newblock {\em Proceedings of the IEEE}, 86(11):2278--2324, 1998.

\bibitem{MS}
D.~Maclagan and B.~Sturmfels.
\newblock {\em Introduction to Tropical Geometry}, volume 161 of {\em Graduate
  Studies in Mathematics}.
\newblock Graduate Studies in Mathematics, 161, American Mathematical Society,
  Providence, RI, 2015.

\bibitem{miranda2017softmax}
Lester~James Miranda.
\newblock Understanding softmax and the negative log-likelihood, 2017.
\newblock ljvmiranda921.github.io.

\bibitem{distlation}
Nicolas Papernot, Patrick McDaniel, Xi~Wu, Somesh Jha, and Ananthram Swami.
\newblock { Distillation as a Defense to Adversarial Perturbations Against Deep
  Neural Networks }.
\newblock In {\em 2016 IEEE Symposium on Security and Privacy (SP)}, pages
  582--597, Los Alamitos, CA, USA, May 2016. IEEE Computer Society.

\bibitem{Pasque2024}
Kurt Pasque, Christopher Teska, Ruriko Yoshida, Keiji Miura, and Jefferson
  Huang.
\newblock Tropical decision boundaries for neural networks are robust against
  adversarial attacks, 2024.

\bibitem{sagemath}
{The Sage Developers}.
\newblock {\em {S}ageMath, the {S}age {M}athematics {S}oftware {S}ystem
  ({V}ersion x.y.z)}, YYYY.
\newblock {\tt https://www.sagemath.org}.

\bibitem{CNNBook}
R.~Venkatesan and B.~Li.
\newblock {\em Convolutional Neural Networks in Visual Computing: A Concise
  Guide}.
\newblock CRC Press, 1st ed. edition, 2017.

\end{thebibliography}

\section{Appendix}
\subsection{Networks considered}\label{subsec:networks_and_hyperparams}

We examine the well-known LeNet5 architecture, initially introduced by \cite{LeNet5}, with the layers listed in \Cref{tab:lenet5}. In addition, we evaluate a modified version of LeNet5, proposed by \cite{Pasque2024}. Its architecture is detailed in  \Cref{tab:ModifiedLeNet5}
\begin{table}
\centering
\begin{tabular}{|l|c|l|}
\hline
\textbf{Layer}         & \textbf{Activation} & \textbf{Key Parameters} \\ \hline
Convolution            & Tanh               & 6 filters, 5x5 windows, 1x1 strides, padding=2 \\ \hline
Average Pooling        & -                  & 2x2 windows, stride=2                          \\ \hline
Convolution            & Tanh               & 16 filters, 5x5 windows, 1x1 strides          \\ \hline
Average Pooling        & -                  & 2x2 windows, stride=2                          \\ \hline
Flatten                & -                  & -                         \\ \hline
Fully Connected & Tanh               & 120 neurons                                   \\ \hline
Fully Connected & Tanh               & 84 neurons                                    \\ \hline
Fully Connected & ReLU               & 10 neurons                         \\ \hline
\end{tabular}
\caption{Layers of the LeNet5 network}
\label{tab:lenet5}
\end{table}
\begin{table}
\centering
\begin{tabular}{|l|c|l|}
\hline
\textbf{Layer}         & \textbf{Activation} & \textbf{Key Parameters} \\ \hline
Convolution            & ReLU               & 64 filters, 3x3 windows, 1x1 strides \\ \hline
Max Pooling            & -                  & 2x2 windows, non-intersecting        \\ \hline
Convolution            & ReLU               & 64 filters, 3x3 windows, 1x1 strides \\ \hline
Max Pooling            & -                  & 2x2 windows, non-intersecting        \\ \hline
Convolution            & ReLU               & 64 filters, 3x3 windows, 1x1 strides \\ \hline
Flatten                & -                  & -                 \\ \hline
Fully Connected  & ReLU               & 64 neurons                           \\ \hline
Fully Connected & ReLU               & 10 neurons                \\ \hline
\end{tabular}
\caption{Description of ModifiedLeNet5}
\label{tab:ModifiedLeNet5}
\end{table}
Both networks were trained using a learning rate of 0.001, reduced by a factor of 0.1 when validation accuracy failed to improve for 5 consecutive epochs. If the learning rate was already decreased three times and the validation accuracy plateaued again, we stop early. We trained for a maximum of 32 epochs with a batch size of 64.

When computing the attacks, we use $\kappa=0$ and learning rate of $0.1$. The value $c$ is initialized as $c=0.001$ and adjusted using a binary search within the range  $[0, 10^{10}]$: Every 10 steps optimization, we evaluate whether \eqref{eq:cw_attack_variable_change} decreased. If not, and the attack was unsuccessful, $c$ is increased; otherwise, if the attack succeeded, $c$ becomes larger.
The threshold $\tau$ is initially chosen as the 7th highest value in the vector $|x - w_{c}|$ and decreased by $0.9$ each time all  entries of $|x - w_{c}|$ and all entries of $|x - w_i|$, $i \neq c$, fall below this threshold.

\bigskip 

\noindent
\footnotesize 

{\bf \noindent Authors:}

\smallskip

\noindent Gillian Grindstaff\\
University of Oxford, U.K.\\
 {\tt grindstaff@maths.ox.ac.uk}
\vspace{0.25cm}

\noindent Julia Lindberg\\
The University of Texas at Austin, U.S.A.\\
 {\tt julia.lindberg@math.utexas.edu}
\vspace{0.25cm}

\noindent Daniela Schkoda\\
Technical University of Munich, Germany\\
 {\tt daniela.schkoda@tum.de}
\vspace{0.25cm}

\noindent Miruna-\c Stefana Sorea\\ 
Lucian Blaga University of Sibiu, Romania\\
{\tt mirunastefana.sorea@ulbsibiu.ro}
\vspace{0.25cm}

\noindent Ruriko Yoshida\\
Naval Postgraduate School, U.S.A.\\
 {\tt ryoshida@nps.edu}

\end{document}